\newif\if@restonecol
\newtheorem{proposition}{Proposition}
\newtheorem{definition}{Definition}
\begin{document}
%
\title{Temporal-Spatial Causal Interpretations for Vision-Based Reinforcement Learning}

%
%
%
%

\author{Wenjie~Shi,
        Gao~Huang,~\IEEEmembership{Member,~IEEE,}
        Shiji~Song,~\IEEEmembership{Senior~Member,~IEEE,}
        and~Cheng~Wu 
\IEEEcompsocitemizethanks{
\IEEEcompsocthanksitem W. Shi, S. Song and C. Wu are with the Department of Automation / Beijing National Research Center for Information Science and Technology (BNRist), Tsinghua University, Beijing 100084, China. E-mail: shiwj16@mails.tsinghua.edu.cn; \{shijis, wuc\}@tsinghua.edu.cn.
\IEEEcompsocthanksitem G. Huang is with the Department of Automation / Beijing National Research Center for Information Science and Technology (BNRist), Tsinghua University, Beijing 100084, China, and also with the Beijing Academy of Artificial Intelligence (BAAI), Beijing 100084, China. E-mail: gaohuang@tsinghua.edu.cn.
}
\thanks{(Corresponding author: Shiji Song.)}

}

%
%

\markboth{IEEE TRANSACTIONS ON PATTERN ANALYSIS AND MACHINE INTELLIGENCE}%
{SHI \textit{et al.}: TEMPORAL-SPATIAL CAUSAL INTERPRETATIONS FOR VISION-BASED REINFORCEMENT LEARNING}
%



\IEEEtitleabstractindextext{%
\begin{abstract}
 Deep reinforcement learning (RL) agents are becoming increasingly proficient in a range of complex control tasks. However, the agent's behavior is usually difficult to interpret due to the introduction of black-box function, making it difficult to acquire the trust of users. Although there have been some interesting interpretation methods for vision-based RL, most of them cannot uncover temporal causal information, raising questions about their reliability. To address this problem, we present a temporal-spatial causal interpretation (TSCI) model to understand the agent's long-term behavior, which is essential for sequential decision-making. TSCI model builds on the formulation of temporal causality, which reflects the temporal causal relations between sequential observations and decisions of RL agent. Then a separate causal discovery network is employed to identify temporal-spatial causal features, which are constrained to satisfy the temporal causality. TSCI model is applicable to recurrent agents and can be used to discover causal features with high efficiency once trained. The empirical results show that TSCI model can produce high-resolution and sharp attention masks to highlight task-relevant temporal-spatial information that constitutes most evidence about how vision-based RL agents make sequential decisions. In addition, we further demonstrate that our method is able to provide valuable causal interpretations for vision-based RL agents from the temporal perspective.
\end{abstract}

\begin{IEEEkeywords}
 Reinforcement Learning, Markov Decision Process, Interpretability, Attention Map, Temporal Causality.
\end{IEEEkeywords}}

\maketitle

\IEEEdisplaynontitleabstractindextext

%
\IEEEpeerreviewmaketitle

\IEEEraisesectionheading{\section{Introduction}\label{sec:introduction}}
 \IEEEPARstart{R}{einforcement} learning (RL) is posed as a problem of sequential decision-making in an interactive environment. Recently, deep RL has led to tremendous progress in a variety of challenging domains \cite{mnih2015human, zhang2020designing}. Nevertheless, they are often criticized for being black boxes and their lack of interpretability, which has increasingly been a pressing concern in deep RL. In many real-world scenarios where trust and reliability are critical, however, it is hardly satisfactory to merely pursue state-of-the-art performance. The temporal causal relations between sequential observations and decisions need to be revealed so that the provided insights are trustworthy. The inability to explain and justify their decisions, makes it harder to deploy RL systems in some safety-critical fields such as healthcare \cite{raghu2017continuous, zhang2020designing} and finance \cite{chia2019machines}. Therefore, it is crucial to develop the ability to reason about the behavior of RL agents for acquiring the trust of users.

 Explaining the decision of black-box systems \cite{cao2019interpretable, monfort2019moments, liu2019tabby} is an area of active research, and there are some popular methods developed for generating visual interpretations, such as LIME \cite{ribeiro2016should}, LRP \cite{binder2016layer}, DeepLIFT \cite{shrikumar2017learning}, Grad-CAM \cite{selvaraju2017grad}, Kernel-SHAP \cite{lundberg2017unified} and network dissection \cite{bau2017network, zhou2018interpreting}. However, most of these methods are proposed exclusively for supervised learning, and cannot be directly adapted to sequential decision-making. So far, some existing methods have provided valuable and insightful interpretations for vision-based RL from the spatial perspective, and they typically focus on a better understanding of what information is attended to and why mistakes are made via visualization techniques, such as gradient backpropagation \cite{zahavy2016graying, wang2016dueling}, perturbation injection \cite{greydanus2018visualizing, puri2020explain} and attention mechanism \cite{mott2019towards}. While gradient-based and perturbation-based methods generally interpret single action rather than long-term behavior, attention-augmented methods need to adapt and retrain the agent model to be interpreted. Moreover, they cannot uncover temporal causal information, which is essential for understanding the behavior of RL agents. While a few works \cite{karpathy2015visualizing, bargal2018excitation} have studied temporal interpretations for recurrent neural networks (RNNs), very little work has provided reliable temporal-spatial interpretations for sequential decisions of RL agents.

 In the context of machine learning, RL is distinguished from other learning paradigms due to two unique characteristics. First, single-timestep observation alone is usually not enough to achieve optimal performance for RL environments that are partially observable. Second, an observation is relevant to not only the current decision but also the future decisions. In other words, the behavior of RL agents depends not only on spatial features but also on temporal features, which intuitively can be extracted from consecutive observations of the same environment. Our work is motivated by trying to understand sequential decisions of vision-based RL agents from a temporal-spatial causal perspective. To that end, we draw on the notion of Granger causality \cite{granger1969investigating}, which is based on the intuition that a cause helps predict its effects in the future. Granger causality is an effective approach to reasoning about temporal causal relations between time series involving many features \cite{arnold2007temporal, gong2015discovering}. Prior works \cite{schwab2019granger, schwab2019cxplain} have applied a non-temporal variant of Granger causality for model interpretation in supervised learning, but there is little use in the practice of attempting to discuss causality in RL without introducing time.

 One of our main contributions is that a \emph{Temporal-Spatial Causal Interpretation} (TSCI) model is proposed to understand sequential decisions of vision-based RL agents. TSCI model builds on the concept of temporal causality that characterizes the temporal causal relations between sequential observations and decisions. To identify temporal-spatial causal features, a separate causal discovery network is employed to learn the temporal causality. TSCI model, once trained, can be used to generate causal interpretations about the agent's behavior in little time. Our approach does not depend on a specific way to obtain the agent model and can be applied readily to deep RL agents that use recurrent structures. In particular, it does not require adapting or retraining the original agent model. We conduct comprehensive experiments on Atari 2600 games of the Arcade Learning Environment \cite{bellemare2013arcade}. The empirical results verify the effectiveness of our method, and demonstrate that our method can produce high-resolution and sharp attention masks to highlight task-relevant temporal-spatial information that constitutes most evidence for the agent's behavior. In other words, our method discovers temporal-spatial causal features to interpret how vision-based RL agents make sequential decisions.

 For RL environments that are partially observable, consecutive observations are generally stacked together to enable the learning of temporal representation about high-level semantic features. Another main contribution of this work is to further reveal and understand the role that temporal dependence plays in sequential decision-making. To that end, we perform more experiments to evaluate the impact of temporal dependence on the agent's long-term performance based on counterfactual analysis, and then leverage the proposed TSCI model to further explain the resulting counterfactual phenomenon. In addition, this work provides empirical explanations about why frame stacking is generally necessary even for the agent that uses a recurrent structure from the point of view of temporal dependence. The results demonstrate that the proposed TSCI model can be applied to provide valuable causal interpretations about the agent's behavior from the temporal-spatial perspective.

 The remainder of this paper is organised as follows. In the following two sections, we summarize the related works and give a brief introduction to the preliminaries used in this work. In Section \ref{sec:methodology}, we mainly present a temporal-spatial causal interpretation (TSCI) model for interpreting vision-based RL agents. In Section \ref{sec:experiments}, empirical results are provided to verify the effectiveness of our method. In Section \ref{sec:temporal interpretaions}, the proposed method is applied to further reveal and understand the role that temporal dependence plays in sequential decision-making. In the last section, we draw the conclusion and outline the future work.

\section{Related Work}\label{sec:related work}
\subsection{Interpreting Deep RL Agents}
 There is a substantial body of literature about how to interpret deep RL agents. While the broad objective of RL interpretation is to make RL policies more understandable, each work has its own special purposes, sets of applicable problems, limitations, and challenges \cite{alharin2020reinforcement}. Here we review some popular interpretation methods introduced in previous works.

 \emph{Gradient-based methods} identify input features that are most salient to the trained deep neural network (DNN) by using the gradient to estimate their influence on the output. A feasible approach is to generate Jacobian saliency maps \cite{simonyan2014deep} to visualize which pixels in the state affect the action the most \cite{wang2016dueling}. There are several variants modifying gradient to obtain more meaningful saliency, such as Integrated Gradients \cite{sundararajan2017axiomatic}, Excitation Backpropagation \cite{zhang2018top}, DeepLIFT \cite{shrikumar2017learning} and Grad-CAM \cite{selvaraju2017grad}. Unfortunately, these gradient-based methods depend on the shape in the neighborhood of a few points and are vulnerable to adversarial attacks \cite{ghorbani2019interpretation}. Furthermore, they are unable to provide a valid interpretation from a temporal perspective. To enable video attribution in the temporal dimension, some methods extend Excitation Backpropagation and Grad-CAM to produce temporal maps \cite{bargal2018excitation, stergiou2019saliency}. However, the above problem still remains unsettled, and most of these methods require a well-designed network structure.

 \emph{Perturbation-based methods} measure the variation of a black-box model's output when some of the input information is removed or perturbed \cite{fong2017interpretable, dabkowski2017real}. It is important to choose a perturbation that removes information without introducing any new information. The simplest perturbation approach is to replace part of an input image with a gray square \cite{zeiler2014visualizing} or region \cite{ribeiro2016should}. In order to provide reliable interpretations, some recent works attempt to estimate feature importance by combining the perturbation approach with Granger causal analysis \cite{granger1969investigating}, such as causal explanations (CXPlain) \cite{schwab2019cxplain} and the attentive mixture of experts (AME) \cite{schwab2019granger}. A particular example of the perturbation approach is Shapley values \cite{shapley1953value, lundberg2017unified, ancona2019explaining}, but the exact computation of which is NP-hard. To interpret deep RL agents, there are some works \cite{greydanus2018visualizing, puri2020explain, iyer2018transparency} using perturbation-based saliency maps to understand how an agent learns a policy, although the saliency maps are suggested to be viewed as an exploratory tool rather than an explanatory tool \cite{atrey2020exploratory}. However, such a suggestion seems to be contentious. First, the proposed method does not work for recurrent agents. Second, it emphasizes the consistency of attribution results with the human inspection, which deviates from the target of attribution, i.e., discovering the regions relied upon by models rather than humans.

 \emph{Attention-augmented methods} incorporate various attention mechanisms into the agent model. Learning attention to generate saliency maps for understanding internal decision pattern is one of the most popular methods \cite{wang2020paying} in deep learning community, and there are already a considerable number of works in the direction of interpretable RL. These works are aimed at getting better interpretability while not sacrificing the performance of RL agents. A simple approach is to augment the actor with customized self-attention modules \cite{manchin2019reinforcement, nikulin2019free, sorokin2015deep}, which learn to focus its attention on semantically relevant areas. Another branch of this category implements the key-value structure of attention to learn explainable policies by sequentially querying its view of the environment \cite{mott2019towards, annasamy2019towards, choi2017multi}. However, attention-augmented methods need to adapt and retrain the agent model, making it unable to interpret the agent models that have been trained or whose network structure cannot be changed. Moreover, attention and causality are two different concepts that are associated with interpretability. While attention aims to find the semantic information that is salient to the agent's decision, causality is the relationship between cause and effect. The principle of causality is that everything has a cause. Different from attention-augmented methods, our work draws on the notion of Granger causality, i.e., a cause helps predict its effects in the future, to discover temporal-spatial causal features for reliable interpretations about the RL agent's behavior.

 Besides the above established categories of interpretation methods, structural causal models (SCMs) \cite{madumal2020explainable}, decision trees \cite{bastani2018verifiable} and mimic models \cite{zhang2020atari} have also recently been proposed for deep RL. However, while SCMs learn an action influence model whose causal structure must be given beforehand, the others are designed for specific models or build on human demonstration datasets. Lastly, a major limitation of most existing RL interpretation methods is that they generally interpret single action rather than long-term behavior and cannot uncover temporal causal information. In contrast, this work follows the idea underlying SSINet \cite{shi2020self} to learn an end-to-end interpretation model, but aims to understand the agent's behavior from the temporal-spatial perspective.

\subsection{Causal Analysis of Time Series}
 Another related research field to ours is causal analysis of time series, which aims to find the temporal causal relations from time series. There are some very interesting past works that have explored to reveal the temporal causal information underlying time series data, such as Granger causal analysis \cite{granger1969investigating, gong2015discovering}, graphical Granger methods \cite{eichler2006graphical, arnold2007temporal}, the SIN method \cite{drton2008sinful} and vector autoregression (VAR) \cite{valdes2005estimating, opgen2007learning}. However, these methods are developed exclusively for non-Markovian and low-dimensional time series data, and have not shown the ability to explain the behavior of deep RL agents whose observation spaces are generally high-dimensional, such as images. In contrast, this work builds on the Granger causality to discover temporal-spatial causal features for interpreting the behavior of vision-based RL agents.

\section{Preliminaries}
 Reinforcement learning (RL) is a general class of algorithms that allow an agent to learn how to sequentially make decisions by interacting with an environment $E$. Specifically, the agent takes an action $a_t$ in an observation $o_t$, and receives a scalar reward $r_{t+1}$. Meanwhile, the environment changes its observation to $o_{t+1}$. Then a history $h_t$ is defined as a sequence of observations and actions $``o_0a_0o_1a_1\cdots a_{t-1}o_t"$ that occurred in the past. A state $s_t$ is a summary of all of the information that an agent could possibly have about its current situation, and formally defined as a sufficient statistic for history $h_t$ \cite{wiering2012reinforcement}. In practice, only finite historical observations $``o_{t-m+1}\cdots o_{t-1}o_t"$, denoted by $o_{t-m+1:t}$, are explicitly considered in a state $s_t$, while the others are usually discarded directly or encoded using memory cells, such as recurrent neural networks (RNNs).

 Formally, an RL task can be modelled as a Markov decision process (MDP) with state space $\mathcal{S}$, action space $\mathcal{A}$, initial state distribution $p_0$, transition dynamics $p(s_{t+1}|s_t,a_t)$, and reward function $r_{t+1}=r(s_t, a_t)$. An agent's behavior is defined by a policy $\pi$, which maps a state to a probability distribution over all actions $\pi:\mathcal{S}\rightarrow \mathcal{P}(\mathcal{A})$. The value function of a state $s_t$ under a policy $\pi$, denoted $v_\pi(s_t)$, is the expected sum of discounted future rewards when starting in $s_t$ and following $\pi$ thereafter, i.e., $v_\pi(s_t)=\mathbb{E}_\pi[\sum_{k=0}^{\infty}\gamma^k r_{t+k+1}]$ with a discount factor $\gamma\in[0,1]$. In this work, the agents to be interpreted are obtained with proximal policy optimization (PPO) algorithm \cite{schulman2017proximal}, which uses trust region update to improve a general stochastic policy with gradient descent.

\section{Methodology}\label{sec:methodology}
 In this section, we start with the description of the interpretability problem addressed in this work. Then we present a strict derivation about temporal causal objective, which forms the theoretical foundation of the temporal-spatial causal interpretation (TSCI) model that is proposed afterwards. Finally, a two-stage training procedure is given for training our TSCI model.

\subsection{Problem Setting}
 Consider the setting in which we need to interpret an actor (or agent) model $\pi$ which sequentially takes as input the state $s_t$ to predict the action $a_t$. For the convenience of our formulation and without loss of generality, a state $s_t$ can be reformulated as a set of $p$ temporal features $X^\mathcal{D}_t=\{o^i_{t-m+1:t}, i\in\mathcal{D}\}$ with $\mathcal{D}=\{1,\cdots,p\}$, which represents all available information in the state $s_t$. The temporal feature $o^i_{t-m+1:t}$ is a sequence of past and present values for the $i$-th specific feature $o^i$. Under the above setting, the causality between state and action refers to the causal relation between temporal features $X^\mathcal{D}_{0:T}= \{o^i_{0:T}, i\in\mathcal{D}\}$ and action sequence $a_{0:T}$ over a horizon $T$. Then, our goal is to develop a separate \emph{Temporal-Spatial Causal Interpretation} (TSCI) model $f_{exp}$ that (i) can discover causal features $X^{\mathcal{D}_c}_{0:T}$ ($\mathcal{D}_c \subseteq \mathcal{D}$) from sequential observations, and (ii) is able to interpret the RL agent's behavior from the temporal-spatial perspective.

\subsection{Temporal Causal Objective}\label{subsec:temporal causal objective}
 The core component of our TSCI model is the temporal causal objective that enables us to learn and discover temporal causal features for understanding the long-term behavior of RL agents. The temporal causal objective builds on Granger causality \cite{granger1969investigating}, which has been widely used to find the causal relation from time series. However, the original Granger causal analysis usually assumes a linear model. In this work, we first contribute an adapted version of Granger causality for the RL domain, i.e., \emph{temporal causality} that is independent of the form of agent model.
\begin{definition}[Temporal Causality]\label{def:sequential causality}
 The causality between temporal features $X^{\mathcal{D}_s}_{0:T} (\mathcal{D}_s\subseteq \mathcal{D})$ and action sequence $a_{0:T}$ exists, denoted by $X^{\mathcal{D}_s}_{0:T}\rightarrow a_{0:T}$, if the agent model $\pi$ is able to predict better actions $a_{0:T}$ using all available information $X^\mathcal{D}_{0:T}$ than if the information apart from $X^{\mathcal{D}_s}_{0:T}$ has been used.
\end{definition}

 Given a state $s_t$ (or temporal features $X^\mathcal{D}_t$), we denote $\varepsilon^{\mathcal{D}-\mathcal{D}_s}_t$ as the prediction error without including any information from the temporal features $X^{\mathcal{D}_s}_t$ and $\varepsilon^\mathcal{D}_t$ as the prediction error when considering all available information. To calculate $\varepsilon^{\mathcal{D}-\mathcal{D}_s}_t$ and $\varepsilon^\mathcal{D}_t$, we first compute the predicted actions $a^{\mathcal{D}-\mathcal{D}_s}_t$ and $a^\mathcal{D}_t$ (abbreviated by $a_t$) without and with using $X^{\mathcal{D}_s}_t$, respectively:
 \begin{align}\label{eqn:predicted actions}
  a^{\mathcal{D}-\mathcal{D}_s}_t &= \pi(X^{\mathcal{D}-\mathcal{D}_s}_t), \\
            a_t = a^\mathcal{D}_t &= \pi(X^\mathcal{D}_t).
 \end{align}
 Note that the predicted output is a probability distribution for discrete action spaces. Denote $a^*_t$ and $v^*$ as the optimal action and the optimal state value function respectively, then we can calculate $\varepsilon^{\mathcal{D}-\mathcal{D}_s}_t$ and $\varepsilon^\mathcal{D}_t$:
 \begin{align}
  \label{eqn:action prediction errors}
  \varepsilon^{\mathcal{D}-\mathcal{D}_s}_t &= \mathcal{L}(a^*_t, a^{\mathcal{D}-\mathcal{D}_s}_t) + \alpha \left\|v^*(X^\mathcal{D}_t)-v(X^{\mathcal{D}-\mathcal{D}_s}_t)\right\|_2, \\
  \label{eqn:value prediction errors}
  \varepsilon^\mathcal{D}_t &= \mathcal{L}(a^*_t, a^\mathcal{D}_t) + \alpha \left\|v^*(X^\mathcal{D}_t)-v(X^\mathcal{D}_t)\right\|_2,
 \end{align}
 where $\alpha$ is a small weight coefficient and $\|\cdot\|_2$ denotes the $L_2$-norm. It is worth emphasizing that the consideration of value consistency is optional in both $\varepsilon^{\mathcal{D}-\mathcal{D}_s}_t$ and $\varepsilon^\mathcal{D}_t$, making our method also applicable to the case where the value function is unavailable. The selection of distance measure $\mathcal{L}$ hinges on the type of action space. Here we use Euclidean distance for continuous action space and Wasserstein distance \cite{arjovsky2017wasserstein} for discrete action space. Following the above definition of temporal causality, we define the degree $\Delta\varepsilon^{\mathcal{D}_s}_{0:T}$ to which the temporal features $X^{\mathcal{D}_s}_{0:T}$ causally contributed to the predicted action sequence $a_{0:T}$ as the decrease in the sum of discounted sequential errors
 \begin{align}\label{eqn:discounted errors}
  \Delta\varepsilon^{\mathcal{D}_s}_{0:T} = \sum_{t=0}^{T}\gamma^t\left(\varepsilon^{\mathcal{D}-\mathcal{D}_s}_t - \varepsilon^\mathcal{D}_t\right).
 \end{align}
 Then we have that if $\Delta\varepsilon^{\mathcal{D}_s}_{0:T}>0$, the temporal features $X^{\mathcal{D}_s}_{0:T}$ includes at least one temporal causal feature that causes $a_{0:T}$ according to Definition \ref{def:sequential causality}. This temporal causal relation does not require direct access to the process by which the agent produces its output and thus does not depend on a specific agent model. Formally, our \emph{temporal causal objective} is to discover the temporal causal features $X^{\mathcal{D}_c}_{0:T}$ such that
 \begin{align}\label{eqn:sequential causal objective}
  \Delta\varepsilon^{\mathcal{D}^\prime}_{0:T}>0,~~ \forall~\mathcal{D}^\prime\subseteq\mathcal{D}_c~\text{and}~\mathcal{D}^\prime\neq\varnothing.
 \end{align}

 However, the optimal action $a^*_t$ and the optimal state value $v^*$ in equations (\ref{eqn:action prediction errors}) and (\ref{eqn:value prediction errors}) cannot be directly obtained, hampering the application of temporal causality in RL. To address this issue, we observe that $X^{\mathcal{D}_c}_{0:T}$ can be obtained by leaving out all non-causal temporal features $X^{\mathcal{D}_{nc}}_{0:T}$ such that $\Delta\varepsilon^{\mathcal{D}_{nc}}_{0:T} = \Delta\varepsilon^{\mathcal{D}-\mathcal{D}_c}_{0:T} \leq 0$. Then $a^*_t$ and $v^*$ can be eliminated by applying the following proposition to construct an upper bound for $\Delta\varepsilon^{\mathcal{D}_{nc}}_{0:T}$.
\begin{proposition}\label{pro:surrogate contribution degree}
 For a specific distance measure $\mathcal{L}$, the following bound holds:
 \begin{align}\label{eqn:surrogate contribution degree}
  \begin{split}
   \Delta\varepsilon^{\mathcal{D}_{nc}}_{0:T} &\leq \Delta\hat{\varepsilon}^{\mathcal{D}_{nc}}_{0:T}     \\
                                              &= \sum_{t=0}^{T}\!\gamma^t\!\left(\mathcal{L}(a^{\mathcal{D}_c}_t\!, a^\mathcal{D}_t)+\alpha\!\left\|v(X^{\mathcal{D}_c}_t)-v(X^\mathcal{D}_t)\right\|_2\right)\!.
  \end{split}
 \end{align}
\end{proposition}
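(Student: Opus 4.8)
The plan is to reduce the claimed inequality to a per-timestep statement and then dispatch each timestep with the triangle inequality. First I would unpack the left-hand side. Since $\mathcal{D}_{nc}=\mathcal{D}-\mathcal{D}_c$, the complementary index set satisfies $\mathcal{D}-\mathcal{D}_{nc}=\mathcal{D}_c$, so that $a^{\mathcal{D}-\mathcal{D}_{nc}}_t=a^{\mathcal{D}_c}_t$ and $v(X^{\mathcal{D}-\mathcal{D}_{nc}}_t)=v(X^{\mathcal{D}_c}_t)$. Substituting the definitions of $\varepsilon^{\mathcal{D}-\mathcal{D}_{nc}}_t$ and $\varepsilon^\mathcal{D}_t$ from equations (\ref{eqn:action prediction errors}) and (\ref{eqn:value prediction errors}) into the discounted error (\ref{eqn:discounted errors}) yields
\begin{align}\label{eqn:plan expansion}
 \Delta\varepsilon^{\mathcal{D}_{nc}}_{0:T} = \sum_{t=0}^{T}\gamma^t\Big[\big(\mathcal{L}(a^*_t,a^{\mathcal{D}_c}_t)-\mathcal{L}(a^*_t,a^\mathcal{D}_t)\big)+\alpha\big(\|v^*(X^\mathcal{D}_t)-v(X^{\mathcal{D}_c}_t)\|_2-\|v^*(X^\mathcal{D}_t)-v(X^\mathcal{D}_t)\|_2\big)\Big].
\end{align}
The obstacle here is that the optimal action $a^*_t$ and optimal value $v^*$ are unknown, and the goal of the proposition is precisely to construct a computable upper bound in which these quantities no longer appear.

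The main tool is that both distance measures in play are genuine metrics, hence satisfy the triangle inequality. For the action terms, using that $\mathcal{L}$ (Euclidean for continuous actions, Wasserstein for discrete actions) is symmetric and obeys $\mathcal{L}(a^*_t,a^{\mathcal{D}_c}_t)\le \mathcal{L}(a^*_t,a^\mathcal{D}_t)+\mathcal{L}(a^\mathcal{D}_t,a^{\mathcal{D}_c}_t)$, I would bound the action difference by $\mathcal{L}(a^{\mathcal{D}_c}_t,a^\mathcal{D}_t)$, which no longer involves $a^*_t$. For the value terms, writing $v^*(X^\mathcal{D}_t)-v(X^{\mathcal{D}_c}_t)=\big(v^*(X^\mathcal{D}_t)-v(X^\mathcal{D}_t)\big)+\big(v(X^\mathcal{D}_t)-v(X^{\mathcal{D}_c}_t)\big)$ and applying the triangle inequality for the $L_2$-norm bounds the value difference by $\|v(X^{\mathcal{D}_c}_t)-v(X^\mathcal{D}_t)\|_2$, eliminating $v^*$.

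Finally, since $\gamma^t\ge 0$ and $\alpha>0$, these two per-timestep bounds combine monotonically under the discounted sum, giving
\begin{align}
 \Delta\varepsilon^{\mathcal{D}_{nc}}_{0:T}\le \sum_{t=0}^{T}\gamma^t\Big(\mathcal{L}(a^{\mathcal{D}_c}_t,a^\mathcal{D}_t)+\alpha\|v(X^{\mathcal{D}_c}_t)-v(X^\mathcal{D}_t)\|_2\Big)=\Delta\hat{\varepsilon}^{\mathcal{D}_{nc}}_{0:T},
\end{align}
which is exactly (\ref{eqn:surrogate contribution degree}). I expect the only nontrivial point to be justifying that the chosen $\mathcal{L}$ is a true metric: the Euclidean case is immediate, while the discrete case relies on the fact that the Wasserstein distance is itself a metric on probability distributions, so the triangle inequality and symmetry both hold. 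Everything else is a routine term-by-term estimate, and no assumption on the internal form of the agent model $\pi$ is needed.
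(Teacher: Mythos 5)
Your proposal is correct and follows essentially the same route as the paper's proof: expand $\Delta\varepsilon^{\mathcal{D}_{nc}}_{0:T}$ using $\mathcal{D}-\mathcal{D}_{nc}=\mathcal{D}_c$, then eliminate $a^*_t$ and $v^*$ term by term via the triangle inequality for $\mathcal{L}$ and for the $L_2$-norm, and sum with the nonnegative discount weights. Your explicit justification that the Wasserstein distance is a genuine metric is a small but welcome addition the paper leaves implicit.
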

\begin{proof}
 In Euclidean geometry and some other geometries, the distance measure $\mathcal{L}$ satisfies the triangle inequality theorem, which states that for any triangle, the sum of the lengths of any two sides must be greater or equal to the length of the remaining side. Therefore, we have
 \begin{align}
  \Delta\varepsilon^{\mathcal{D}_{nc}}_{0:T}
  & = \sum_{t=0}^{T}\gamma^t\left(\varepsilon^{\mathcal{D}-\mathcal{D}_{nc}}_t - \varepsilon^\mathcal{D}_t\right)    \\
  & = \sum_{t=0}^{T}\gamma^t\left(\varepsilon^{\mathcal{D}_c}_t - \varepsilon^\mathcal{D}_t\right)    \\
  & = \sum_{t=0}^{T}\!\gamma^t\!\bigg(\mathcal{L}(a^*_t, a^{\mathcal{D}_c}_t) + \alpha \big\|v^*(X^\mathcal{D}_t) - v(X^{\mathcal{D}_c}_t)\big\|_2\! \\ \nonumber
  &~~~~~~~~~~~~~~~ - \mathcal{L}(a^*_t, a^\mathcal{D}_t) - \alpha \!\left\|v^*\!(X^\mathcal{D}_t) - v(X^\mathcal{D}_t)\right\|_2\!\bigg)\!   \\
  & \leq \sum_{t=0}^{T}\gamma^t\!\left(\!\mathcal{L}(a^{\mathcal{D}_c}_t, a^\mathcal{D}_t)+\alpha \big\|v(X^{\mathcal{D}_c}_t)-v(X^\mathcal{D}_t)\big\|_2\!\right)\! \\
  & = \Delta\hat{\varepsilon}^{\mathcal{D}_{nc}}_{0:T}.
 \end{align}
\end{proof}

 Proposition \ref{pro:surrogate contribution degree} shows the possibility to formulate the temporal causality without the need to provide $a^*_t$ and $v^*$. More concretely, the inequality $\Delta\varepsilon^{\mathcal{D}_{nc}}_{0:T} \leq 0$ can be guaranteed by limiting the upper bound to no greater than zero, i.e., $\Delta\hat{\varepsilon}^{\mathcal{D}_{nc}}_{0:T} \leq 0$. On the other hand, we have $\Delta\hat{\varepsilon}^{\mathcal{D}_{nc}}_{0:T} \geq 0$ according to the definition in (\ref{eqn:surrogate contribution degree}). Based on the above observations, an easy-to-implement variant of temporal causal objective (\ref{eqn:sequential causal objective}) is to leave out all non-causal temporal features $X^{\mathcal{D}_{nc}}_{0:T}$ that satisfies
 \begin{align}\label{eqn:variant of sequential causal objective}
  \Delta\hat{\varepsilon}^{\mathcal{D}_{nc}}_{0:T} = \Delta\hat{\varepsilon}^{\mathcal{D}-\mathcal{D}_c}_{0:T} = 0.
 \end{align}

 The above causality analysis method is applicable not only to high-dimensional image data but also to low-dimensional vector data. However, one of the main purposes of this work is to understand the RL agent's sequential decision-making from the temporal perspective, hence we only focus on vision-based RL environments that are partially observable and thus show obvious temporal dependence between consecutive observations of a state. In contrast, some RL environments that use vector data are usually fully observable, such as MuJoCo \cite{todorov2012mujoco}. For vision-based RL environments that have high-dimensional state space, another challenge of temporal causality is that it is difficult to separate temporal features $o^i_{0:T} (i\in \mathcal{D})$ from each other. In non-temporal scenarios, a sensible method is to group non-overlapping regions of adjacent pixels into super-pixels \cite{schwab2019cxplain, lundberg2017unified}. However, the same semantic feature may be in different locations of images at every timestep, hence the ``super-pixels'' method is not feasible in our temporal setting. To tackle this challenge, we use DNNs to build our TSCI model for vision-based RL in the next section. An advantage of deep models is that they can extract high-level feature representations from high-dimensional data \cite{lecun2015deep}, and thus remove the need to perform manual feature engineering.

 \begin{figure*}[t]
	\setlength{\abovecaptionskip}{-0.01cm}
	\setlength{\belowcaptionskip}{-0.20cm}
	\begin{center}
		\includegraphics[width=0.95\linewidth]{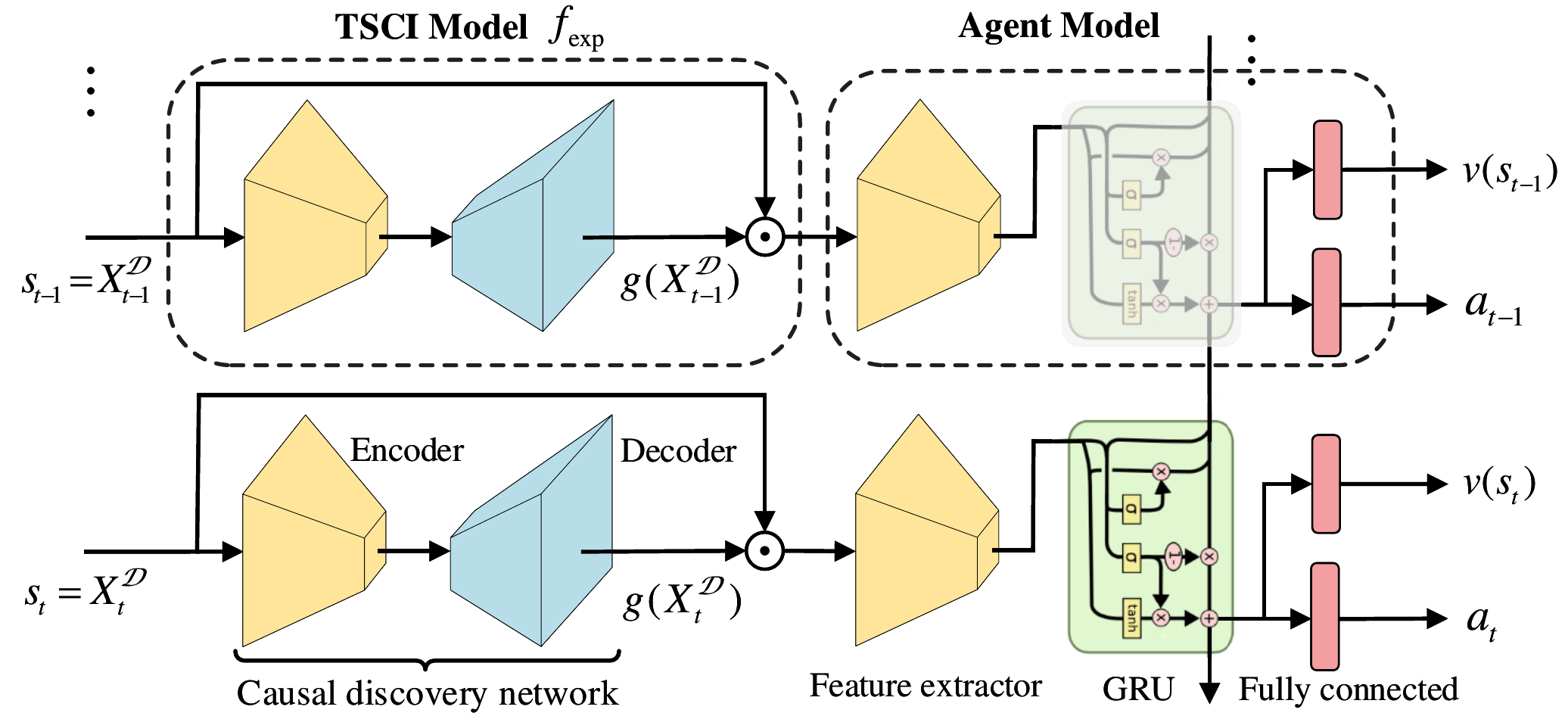}
	\end{center}
	\caption{Architecture diagram of our TSCI model and the agent model to be interpreted. The agent model consists of a feature extractor, a Gated Recurrent Unit (GRU) \cite{cho2014properties} and two fully-connected layers. TSCI model mainly involves an encoder-decoder structure with the encoder shared from the feature extractor. The agent model and the encoder are fixed during training.}
	\label{fig:cdnet}
\end{figure*}
\subsection{Temporal-Spatial Causal Interpretation Model}
 Here we first explain the temporal-spatial causality considered in this work, then we present a trainable TSCI model that learns to generate temporal-spatial causal interpretations about the sequential decisions of vision-based RL agent in an end-to-end manner.

\textbf{Temporal-Spatial Causality.}
 In this work, temporal-spatial causality is based on the intuition that a cause helps predict its effects in the future, and it is supposed to suggest informative explanations that accurately represent the intrinsic reasons for the agent's decision-making from both the spatial and temporal dimensions. More concretely, spatial causality aims to uncover task-relevant semantic features that affect the agent's decision-making, i.e., what information is important and where to look, while temporal causality focuses on revealing the underlying temporal dependence between temporally adjacent observations and how the feature importance varies as time goes on.

\textbf{Training Objective.}
 Based on temporal-spatial causality, TSCI model aims to predict which parts of the input state are considered causal for the agent's behavior. To that end, TSCI model learns a mask generation function $g(X^\mathcal{D}_t)$, taking value between 0 and 1, to discover causal features
 \begin{align}\label{eqn:sequential causal features}
  X^{\mathcal{D}_c}_t=f_{exp}(X^\mathcal{D}_t)= X^\mathcal{D}_t\odot g(X^\mathcal{D}_t),
 \end{align}
 where $\odot$ denotes the element-wise multiplication. The generated masks map sequential input pixels to saliency scores, which reflect the relative degree to which the causal features at every timestep causally contribute to the action. A reasonable choice of training objective is to adopt the variant of temporal causal objective (\ref{eqn:variant of sequential causal objective}), which is better for optimization than the vanilla form (\ref{eqn:sequential causal objective}). Substituting equation (\ref{eqn:sequential causal features}) into (\ref{eqn:variant of sequential causal objective}), we have that $X^{\mathcal{D}_{nc}}_{0:T}$ is the maximal set satisfying
 \begin{align}\label{eqn:non-causal feature series}
  \begin{split}
   \Delta\hat{\varepsilon}^{\mathcal{D}_{nc}}_{0:T} = \sum_{t=0}^{T}\gamma^t \Big(&\mathcal{L}\big(\pi(f_{exp}(X^\mathcal{D}_t)), \pi(X^\mathcal{D}_t)\big) + \\
   &\alpha\big\|v(f_{exp}(X^\mathcal{D}_t))-v(X^\mathcal{D}_t)\big\|_2\Big) = 0,
  \end{split}
 \end{align}
 which requires the agent’s behavior to be consistent with the original after the states are overlaid with the attentions generated by $f_{exp}$. Taking the above conditions into consideration, the objective function is defined to minimize the upper bound of contribution degree $\Delta\hat{\varepsilon}^{\mathcal{D}_{nc}}_{0:T}$ added with a sparse regularization term
 \begin{align}\label{eqn:objective function}
  \mathcal{L}_{TSCI}(g) = \Delta\hat{\varepsilon}^{\mathcal{D}_{nc}}_{0:T} - \beta\sum_{t=0}^{T}\left\|1-g(X^\mathcal{D}_t)\right\|_1,
 \end{align}
 where $\|\cdot\|_1$ denotes the $L_1$-norm, and $\beta$ is a coefficient controlling the sparseness of the mask. In fact, the sparse regularization term requires $f_{exp}$ to attend to as little information as possible, enabling easy understanding of decision-making for humans. In total, this training objective leads to adversarial masks and is composed of two terms. The first term ensures that the change of prediction errors, after the non-causal temporal features are removed, is close to zero. The second term encourages that the masked non-causal feature region is large and thus pushes for better compliance with the temporal causal objective.

\textbf{Causal Discovery Network.}
 As mentioned above, the main purpose of the causal discovery network is to learn the mask generation function $g(X^\mathcal{D}_t)$, which produces an attention mask to highlight the task-relevant information for making decision. To that end, the causal discovery network must learn which parts of the state are considered important by the agent. In the field of computer vision, learning the mask is a dense prediction task, which arises in many vision problems, such as semantic segmentation \cite{ronneberger2015u, lin2017refinenet} and scene depth estimation \cite{mayer2016large}. In order to make the masks sharp and precise, we adapt a U-Net \cite{ronneberger2015u} architecture to build the causal discovery network for TSCI model, as depicted in Figure \ref{fig:cdnet}. As discussed in previous section, the input state is composed of temporally extended observations, hence it is no longer necessary to design a recurrent structure for the causal discovery network, the detailed architecture of which is given in Appendix A.1 of the supplementary material. In particular, instead of learning $g(X^\mathcal{D}_t)$ from scratch, we directly reuse the feature extractor of the agent model as the encoder. This greatly reduces the risk of overfitting and ensures that the generated masks are semantically consistent with the agent model. Therefore, we just need to optimize the decoder with the temporal causal objective.

\subsection{Training Procedure}
 We train the causal discovery network to directly minimize the objective function (\ref{eqn:objective function}) with supervised learning. The weights of the encoder (yellow block on the left in Figure \ref{fig:cdnet}) are kept fixed during the training.

 Our training procedure includes two stages. In the first stage, to capture the temporal causal relations between states and actions at different timesteps, temporal data is collected for training. More concretely, we first use the agent model to collect $M$ episodes with a fixed horizon $T$. Each episode is divided into a state sequence, an action sequence and a state value sequence. While the state sequences are regarded as the input data, the action and state value sequences together form the label. Once the episode dataset is built, a standard supervised learning procedure is then applied to train the causal discovery network in the second stage. The pseudo-code of training is summarized in Algorithm \ref{alg:TSCI}. In fact, an alternative approach to training our causal discovery network is to use single-timestep state-action pairs and non-temporal objective functions. In our experiments, we will make a fair comparison between our method and similar methods that use non-temporal objective functions.
 \setlength{\algomargin}{1.5em}
 \begin{algorithm}[h]
    \caption{The training procedure of TSCI model}
    \label{alg:TSCI}
    The agent model or actor $\pi$ to be interpreted and corresponding critic $v$\;
    Initialize a causal discovery network $g$. The encoder is initialized with the feature extractor of $\pi$, while the decoder is initialized with random weights\;
    Use $\pi$ to collect $M$ episodes with a fixed horizon $T$, and build the dataset for training\;
    \For{Epoch = 1, K}
    {
      \For{Iteration = 1, $M$ mod $N$}
      {
        Sample $N$ episodes\;
        Calculate the objective function defiend by Equation \eqref{eqn:objective function}\;
        Update the weights of decoder while the weights of encoder are fixed\;
      }
    }
 \end{algorithm}
 \begin{figure*}[t]
  \setlength{\abovecaptionskip}{-0.01cm}
  \setlength{\belowcaptionskip}{-0.10cm}
  \begin{center}
   \includegraphics[width=1.0\linewidth]{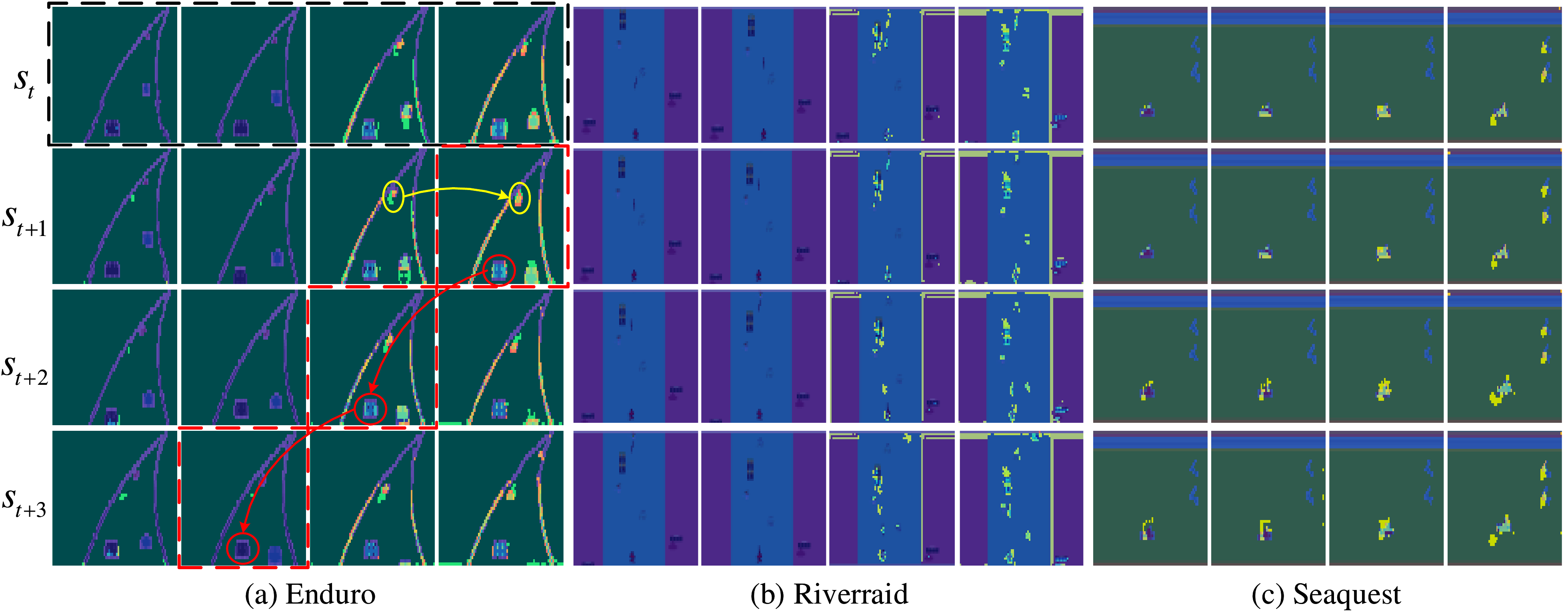}
  \end{center}
  \caption{Visualization of temporal-spatial causal features discovered by our method. The last four frames are explicitly considered for each state as shown by the dashed black rectangle (time goes from left to right). The frames on the diagonal are identical as shown by the dashed red rectangles.}
  \label{fig:temporal causal features}
 \end{figure*}

\section{Validity of Our Method}\label{sec:experiments}
 Before we apply the proposed method to render temporal-spatial causal interpretations for vision-based RL agents, we first verify the effectiveness of our method through performance evaluation and comparative evaluation in this section. Subsequently, in Section \ref{sec:temporal interpretaions}, the proposed TSCI model is applied to further reveal and understand the role that temporal dependence plays in sequential decision-making.

\subsection{Experiment Setup}
 We conduct extensive experiments on Atari 2600 games of the Arcade Learning Environment \cite{bellemare2013arcade}, which is a widely used benchmark in the field of RL interpretability. The agent models to be interpreted are pretrained with an actor-critic setup and the standard PPO training procedure. Then we apply the agent model to generate $10^4$ episodes with a fixed horizon 64. Finally, the decoder of causal discovery network is trained using the collected episode data. More details regarding task setup and training hyperparameters are provided in Appendix A.2 of the supplementary material. To enable fair and meaningful evaluations, we mainly select vision-based tasks for three reasons. First, we are better able to manipulate the state on vision-based tasks for some specific purposes. Second, most of the existing methods that we wish to compare to are developed exclusively for vision-based tasks. Third, these vision-based tasks are generally partially observable, making it convenient to verify the underlying temporal dependence of sequential decision-making. Nevertheless, we note that the temporal causality introduced in Section \ref{subsec:temporal causal objective} is compatible with any deep RL algorithm and task.

 \begin{figure*}[t]
  \setlength{\abovecaptionskip}{-0.01cm}
  \setlength{\belowcaptionskip}{-0.05cm}
  \begin{center}
   \includegraphics[width=0.95\linewidth]{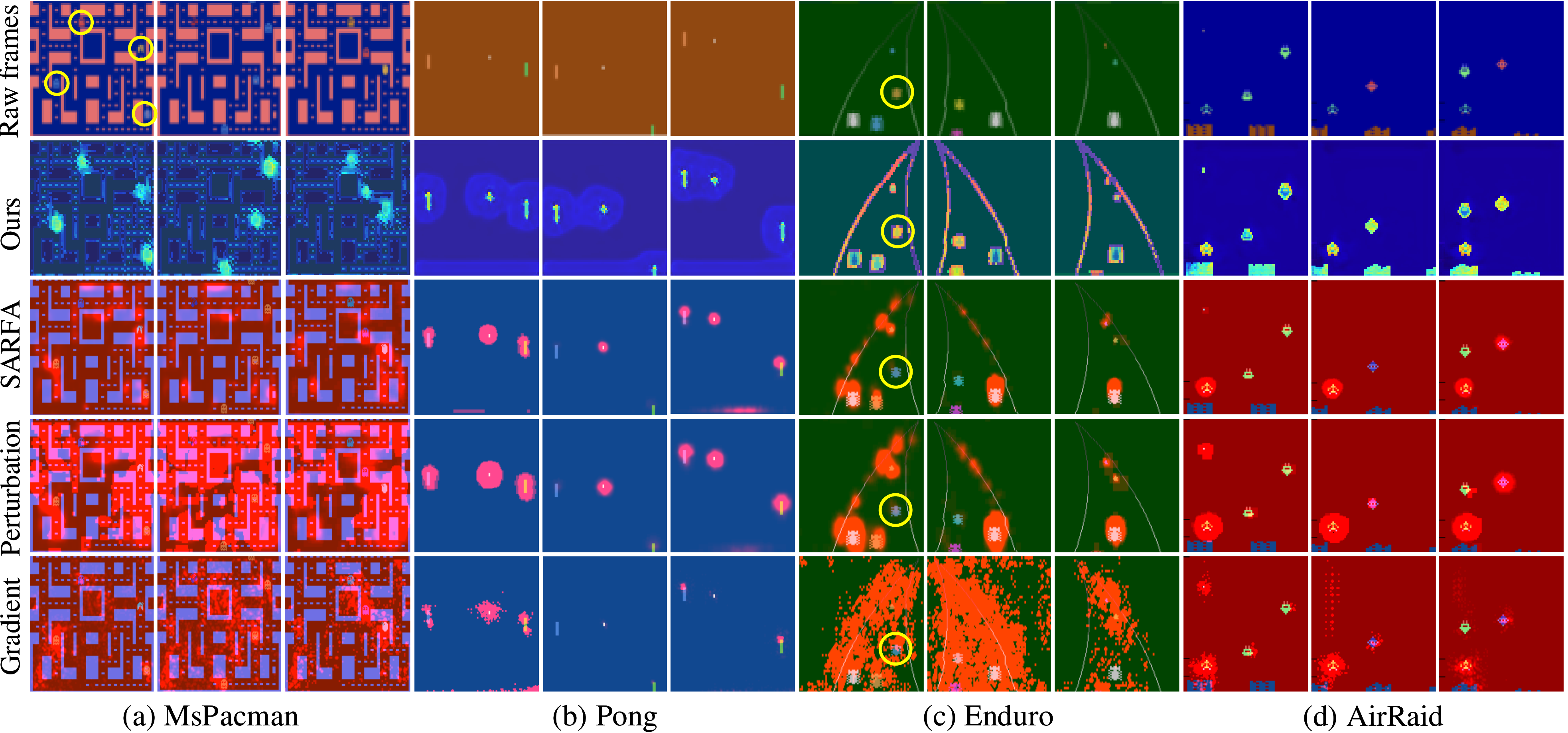}
  \end{center}
  \caption{Comparing saliency maps generated by different methods including our method, SARFA \cite{puri2020explain}, Gaussian perturbation method \cite{greydanus2018visualizing} and gradient-based method \cite{wang2016dueling}.}
  \label{fig:saliency comparisons}
 \end{figure*}
\subsection{Evaluations}
 The main goal of our evaluations is to demonstrate the effectiveness of our proposed TSCI model by answering the following three questions:

 \textbf{\emph{Question 1: Is our method capable of discovering temporal-spatial causal features for a better interpretation of the RL agent's sequential decisions?}}~
 Figure \ref{fig:temporal causal features} visualizes the temporal-spatial causal features discovered by our method and reveals how they changes over time. The most dominant pattern is that the agent focuses attention selectively on only small regions which are strongly task-relevant at each timestep, while other regions are very ``blurry'' and can be ignored. In other words, the agent learns what information is important for making decisions and where to look at each timestep. In addition, the agent's sequential decisions can be understood from at least two aspects. First, the agent's decision is causally attributed to not only the features of current timestep but also that of past timesteps. For example, as the yellow ellipses show, both the driving cars in the last two frames are found to be remarkably causal for the decision. In fact, multiple observations of the same object can provide information such as motion direction, velocity or acceleration. Second, the same causal features contribute to the actions in varying degrees at different timesteps. As illustrated by the red circles, the cars observed at the past timesteps become decreasingly important for the current decision as time goes on. More results on other tasks are provided in Appendix B of the supplementary material.

 In particular, it is worth noting that the results in Figure \ref{fig:temporal causal features} only represent the relative importance of discovered features at different timesteps. Therefore, the first two frames may still be important for the agent's decision-making, though they are visually less salient than the last two frames. In fact, the degree of feature saliency is related to the choice of regularization coefficient $\beta$ in Equation \eqref{eqn:objective function}. The ablation analysis of $\beta$ is given in Appendix A.2 of the supplementary material.

 \textbf{\emph{Question 2: How does TSCI compare to existing RL interpretation methods in terms of the quality of discovered features?}}~
 We compare TSCI against several popular RL interpretation methods including specific and relevant feature attribution (SARFA) \cite{puri2020explain}, Gaussian perturbation method \cite{greydanus2018visualizing} and gradient-based method \cite{wang2016dueling}. Here we do not consider attention-augmented methods for comparisons, since they generally require adapting and retraining the agent model to be interpreted. Figure \ref{fig:saliency comparisons} shows the saliency maps generated by different methods. It can be seen that our method produces higher-resolution and sharper saliency maps than the others. As illustrated by the yellow circles on MsPacman task, our method is able to locate precisely all causal features. In contrast, other methods either highlight lots of non-causal features or omit some causal features, as shown by the yellow circles on Enduro task.

 To quantitatively evaluate the quality of features discovered by different methods, we further compare the average return of several policies that can access to only the pixels of particular features obtained by different methods during the training. Table \ref{tab:comparison return} summarizes the results of four methods, and all results are averaged across five random training runs. It can be seen that the policy still achieves good performance when accessing to only the pixels of the features discovered by our method. In contrast, we can observe obvious performance degradation when using the other methods to generate the features.
 \begin{table}[h]
  \centering
  \fontsize{9.0}{10}\selectfont
  \setlength{\tabcolsep}{1.5mm}{}
  \caption{The performance comparison of the features discovered by different methods.}
  \label{tab:comparison return}
  \begin{threeparttable}
   \begin{tabular}{ccccc}
    \toprule
     Tasks        & Ours          & SARFA \cite{puri2020explain}  & Perturbation \cite{greydanus2018visualizing} & Gradient \cite{wang2016dueling} \cr
    \midrule
    Enduro        & \textbf{2903} & 2369   & 1741         & 819      \cr
    Seaquest      & \textbf{2517} & 2085   & 1830         & 836      \cr
    \bottomrule
   \end{tabular}
  \end{threeparttable}
 \end{table}

 \begin{figure*}[t]
  \setlength{\belowcaptionskip}{-0.20cm}
  \centering
  \begin{minipage}[t]{0.32\textwidth}
   \centering
   \includegraphics[width=0.88\linewidth]{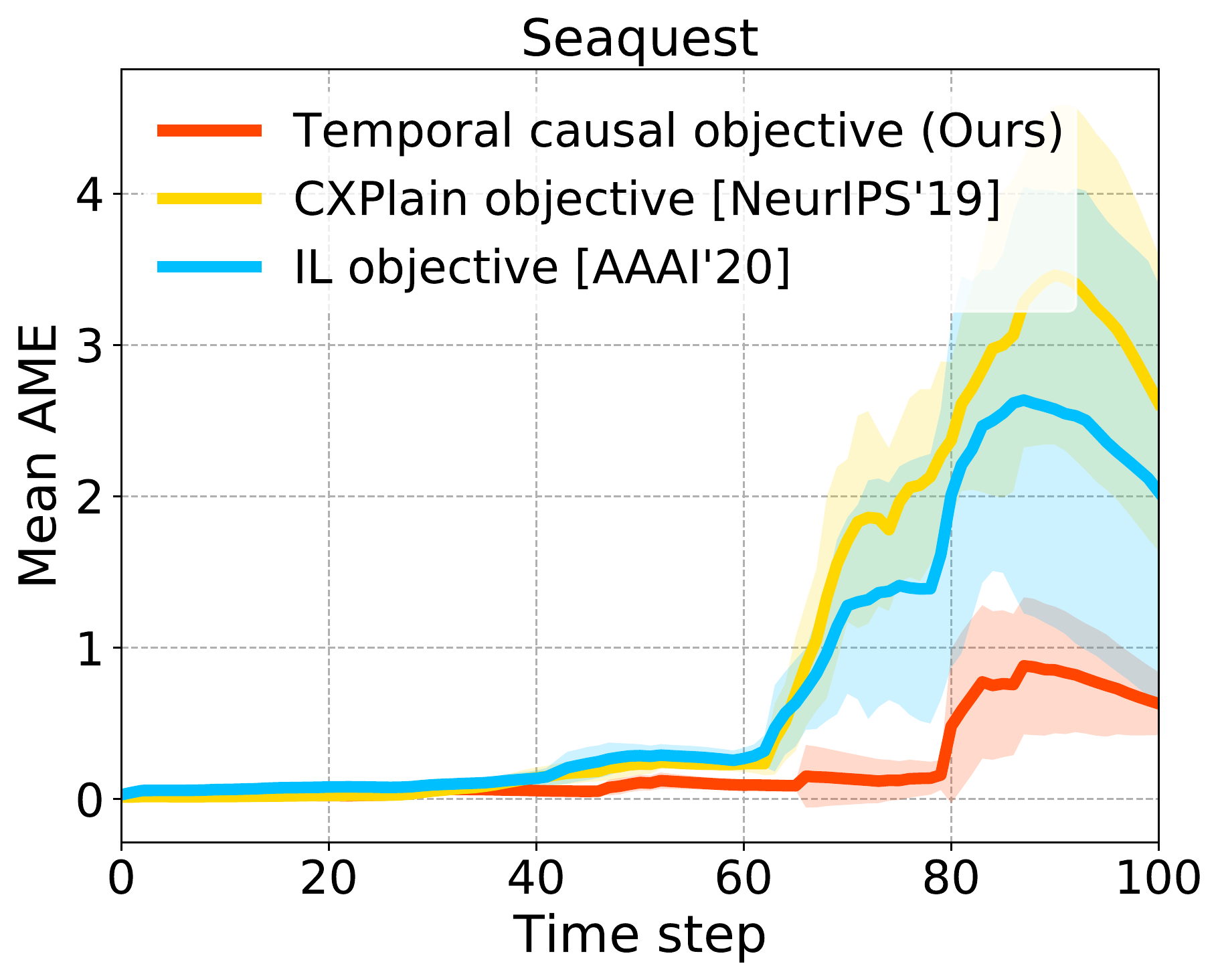}
  \end{minipage}
  \begin{minipage}[t]{0.32\textwidth}
   \centering
   \includegraphics[width=0.95\linewidth]{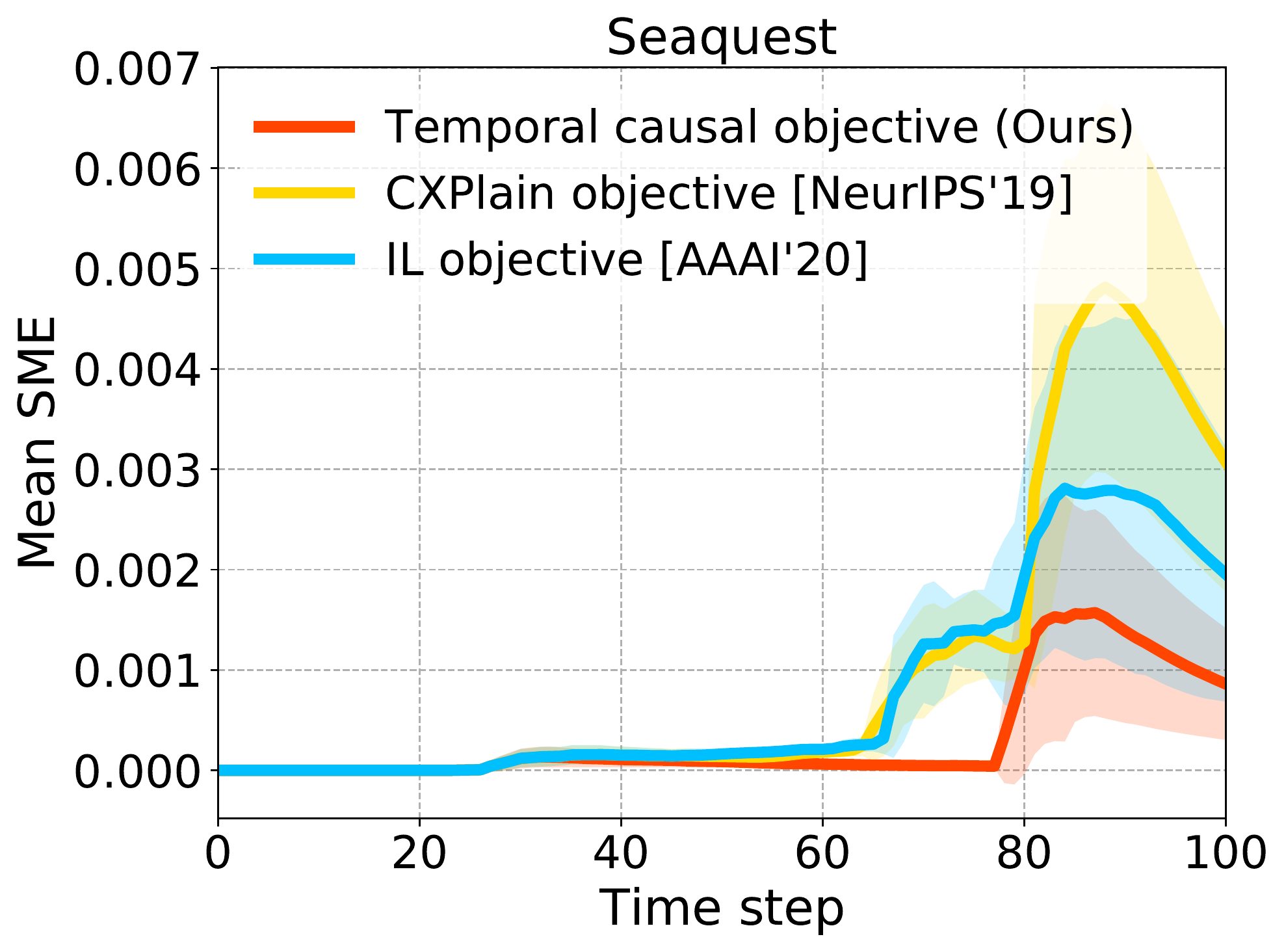}
  \end{minipage}
  \begin{minipage}[t]{0.32\textwidth}
   \centering
   \includegraphics[width=0.95\linewidth]{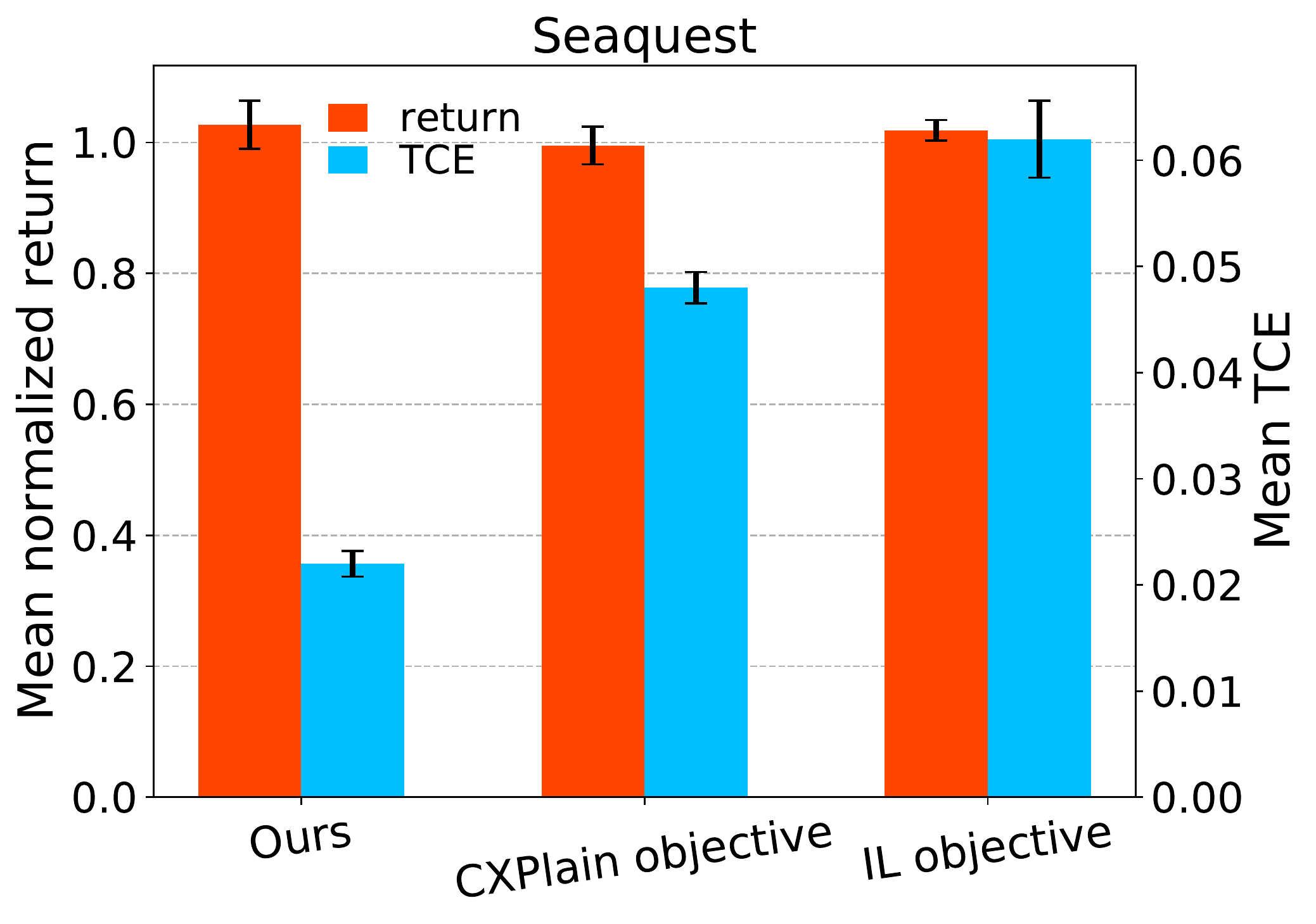}
  \end{minipage}
  \begin{minipage}[t]{0.32\textwidth}
   \centering
   \includegraphics[width=0.92\linewidth]{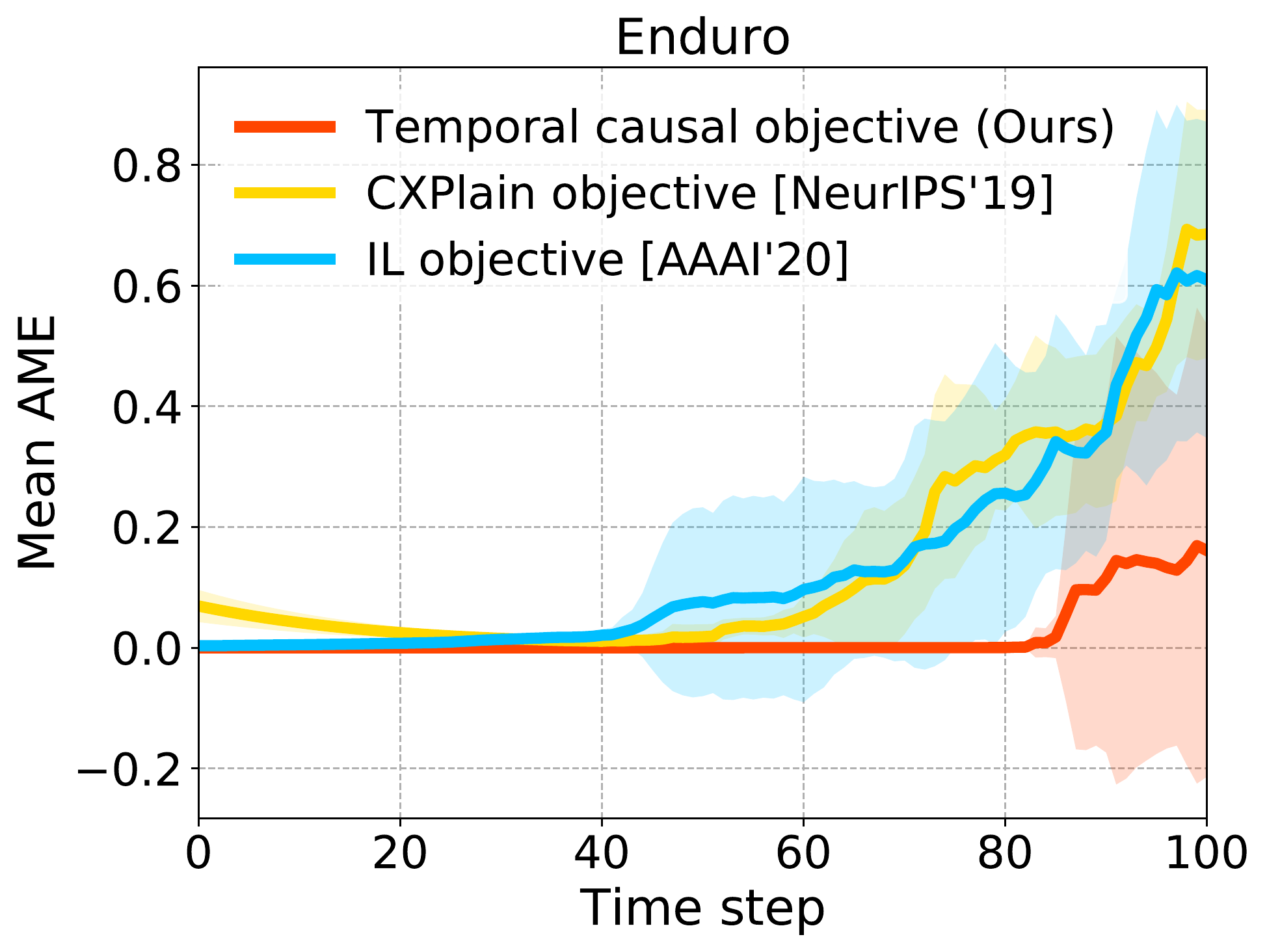}
  \end{minipage}
  \begin{minipage}[t]{0.32\textwidth}
   \centering
   \includegraphics[width=0.95\linewidth]{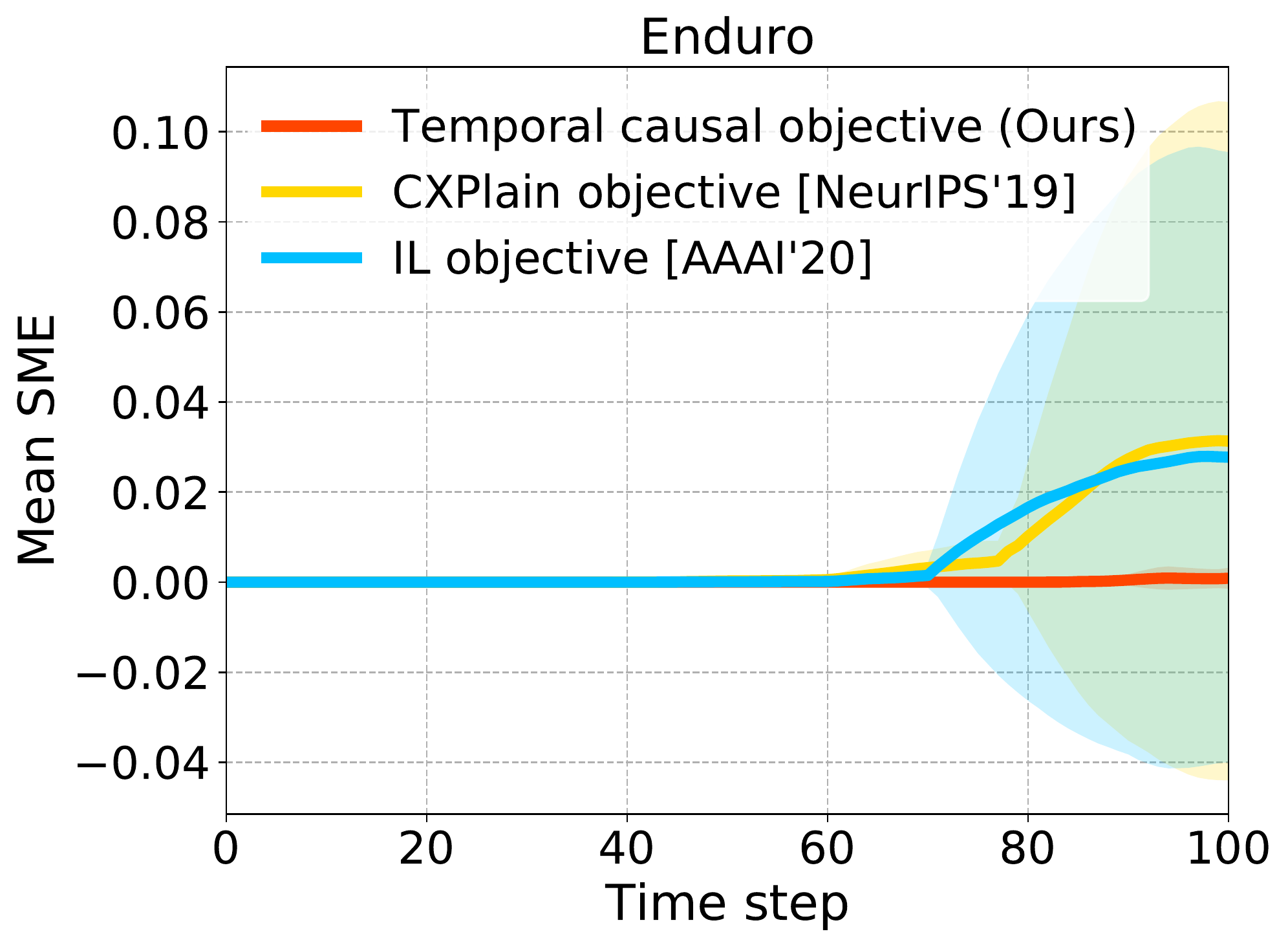}
  \end{minipage}
  \begin{minipage}[t]{0.32\textwidth}
   \centering
   \includegraphics[width=0.95\linewidth]{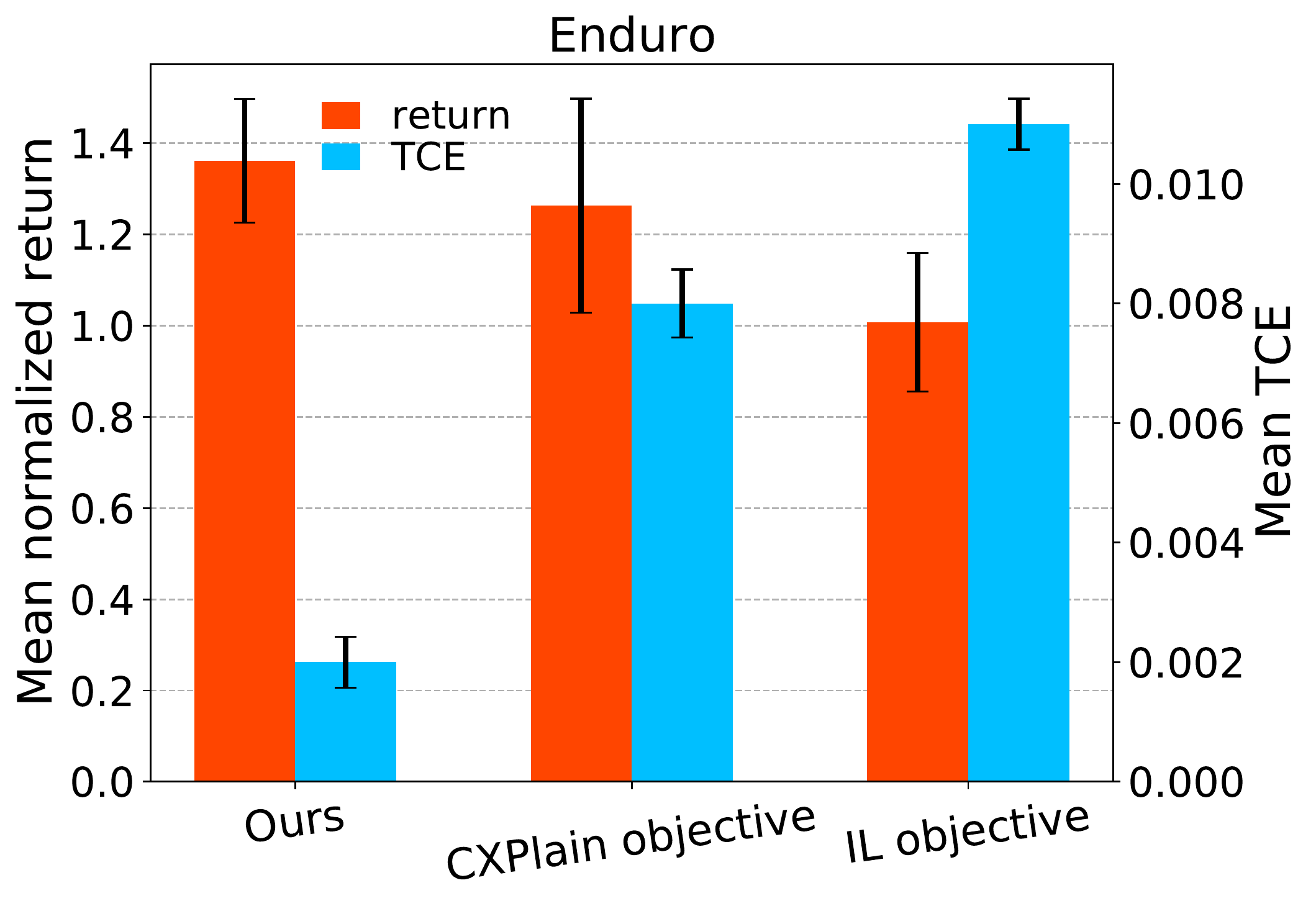}
  \end{minipage}
  \caption{Reliability evaluation metrics. Comparing against CXPlain objective \cite{schwab2019cxplain} and imitation learning (IL) objective \cite{zhang2020atari}. All results are averaged across seven evaluation episodes.}
  \label{fig:evaluation metrics}
 \end{figure*}
 \textbf{\emph{Question 3: Does our temporal causal objective enable reliable causal interpretations about the RL agent's long-term behavior?}}~
 As presented in Section \ref{sec:methodology}, our TSCI model mainly relies on the temporal causal objective to interpret the RL agent's long-term behavior, hence we consider two recently popular objectives for comparisons against our temporal causal objective. 1) CXPlain objective \cite{schwab2019cxplain}, which also builds on the definition of Granger causality \cite{granger1969investigating} used by our TSCI but in a non-temporal form. In our implementation, we apply Proposition \ref{pro:surrogate contribution degree} to avoid the use of the optimal actions in RL. 2) Imitation learning (IL) objective \cite{zhang2020atari, pomerleau1991efficient}, which treats the behavior matching task as a multi-class pattern problem with a standard log-likelihood objective. In particular, similar to temporal causal objective, we add the single-timestep value prediction error in CXPlain and imitation objectives for the fairness of comparison. It is worth emphasizing that both CXPlain and imitation objectives use independent state-action pairs for training, while our temporal causal objective treats an episode (or state-action sequence) as a sample. Therefore, different from the others, our temporal causal objective minimizes the cumulative discounted errors rather than single-timestep prediction error. Totally, all of these methods use the same network architecture except that the objective functions used for training are different.

 Now we consider the metrics used for comparisons. In the context of sequential decision-making, we are usually more concerned with the agent's long-term return than one-step reward. Therefore, a reasonable way to evaluate the reliability is to measure the degree of long-term behavior matching rather than one-step action matching. To that end, we use four evaluation metrics to measure the reliability of the generated causal interpretations, namely temporal causality error (TCE) $e_{tc}$, normalized return $\overline{R}$, action matching error (AME) $e_a(t)$ and state matching error (SME) $e_s(t)$. Specifically, suppose the trajectories $\{s_0,\pi(s_0),r_1,s_1,\pi(s_1)\cdots\}$ and $\{s_0,\pi(f_{exp}(s_0)),\hat{r}_1,\hat{s}_1,\pi(f_{exp}(\hat{s}_1))\cdots\}$ are generated by the agent model $\pi$ taking as input all available information and only the causal features discovered by $f_{exp}$ respectively, then the TCE $e_{tc}$ is calculated using equation \eqref{eqn:non-causal feature series} in an undiscounted form while the other evaluation metrics are calculated as follows:
 \begin{align}\label{eqn:evaluation metrics}
    \overline{R} &= \sum\nolimits_t\hat{r}_t\Big/\sum\nolimits_t r_t, \\
          e_a(t) &= KL\big(\pi(f_{exp}(\hat{s}_t)), \pi(s_t)\big), \\
          e_s(t) &= \big\|(v(f_{exp}(\hat{s}_t)) - v(s_t)\big\|^2_2,
 \end{align}
 where $KL(\cdot)$ denotes the Kullback-Leibler divergence. In fact, similar metrics are suggested in prior work \cite{zhang2020atari}. In the above setting, two episodes are obtained by performing different rollouts from the same initial state, indicating that the SME $e_s(t)$ and AME $e_a(t)$ depend on the whole trajectory before time-step $t$ rather than only the current state-action pair. Consequently, these metrics are able to measure the consistency of long-term behaviors between two trajectories on deterministic RL environments.

 Figure \ref{fig:evaluation metrics} shows the comparison results of all three methods. More results on other tasks are provided in Appendix B of the supplementary material. It can be seen that although all three methods have no performance loss in terms of mean normalized return, our TSCI model that uses temporal causal objective has smaller behavior matching errors and temporal causality error than the others. Hence our temporal causal objective enables reliable interpretations about the RL agent's long-term behavior from the temporal causality perspective.

\section{Temporal Interpretations for Deep RL}\label{sec:temporal interpretaions}
 As discussed above, an important observation about the temporal-spatial causal features discovered by TSCI model is that the agent can extract high-level semantic information from consecutive observations of the same object. In other words, there exists an underlying temporal dependence between temporally adjacent frames, which are connected by the same semantic concepts. The goal of this section is to further reveal and understand the role that temporal dependence plays in sequential decision-making. To that end, we first provide counterfactual analysis about temporal dependence by evaluating its impact on the agent's long-term performance, and leverage TSCI model to explain the resulting counterfactual phenomenon. Then, TSCI model is further applied to interpret temporally-extended RL agents and reason about why frame stacking is generally necessary even for the agent that has used a recurrent structure from the point of view of temporal dependence. Finally, we apply the proposed method to provide more downstream interpretations for vision-based RL.

\subsection{Counterfactual Analysis of Temporal Dependence}
 Here an intervention-based approach is proposed to render empirical evidences about the underlying temporal dependence in vision-based RL. Specifically, we intervene on the input state (or temporally extended sequences of frames) to produce counterfactual conditions. Prior work \cite{szegedy2014intriguing} has focused on manipulating the pixel input, but this does not modify the underlying temporal dependence. Instead, we intervene directly on the input state to change semantic concepts related to temporal dependence. For the convenience of manipulation, we intervene on the input state by masking out partial semantic information located on different frames. For example, we denote ``34'' as the scheme where the semantic information in the third and fourth frames are partially masked out, as shown in Figure \ref{fig:example}. In particular, we denote ``None'' as the case without any semantic information masked out.
 \begin{figure}[t]
 	\setlength{\abovecaptionskip}{-0.01cm}
 	\setlength{\belowcaptionskip}{-0.20cm}
 	\begin{center}
 		\includegraphics[width=0.95\linewidth]{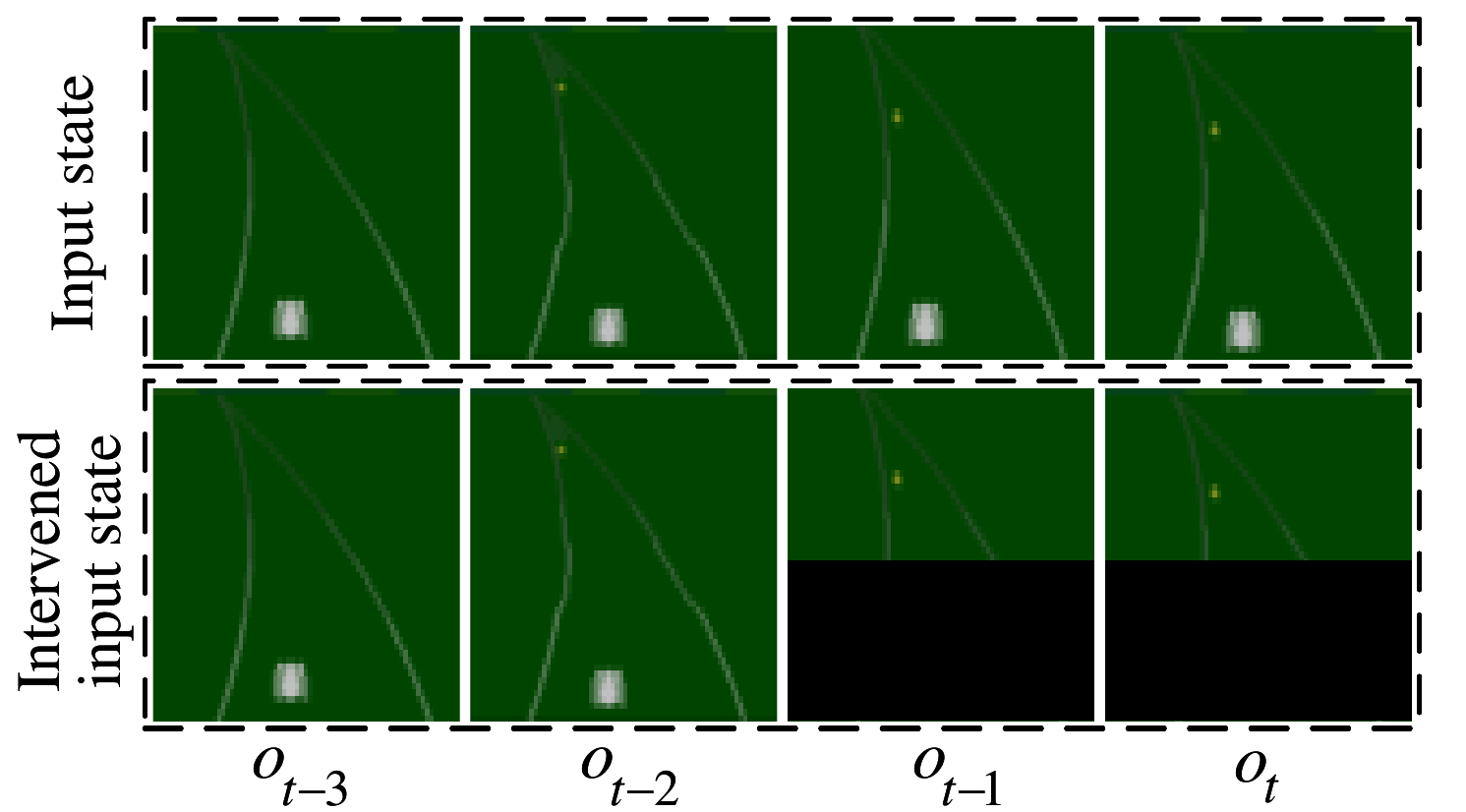}
 	\end{center}
 	\caption{An example for how to intervene on the input state, which consists of the last four consecutive frames $``o_{t-3}\cdots o_t"$.}
 	\label{fig:example}
 \end{figure}

 In order to evaluate the impact of temporal dependence on the agent's long-term performance, we compare the average return of the agent model under different counterfactual conditions. In our implementation, we suppress the underlying temporal dependence between frames to varying degrees by applying different intervene schemes to temporally-extended frames as described above. The agent to be evaluated remains unchanged except for the input state. The empirical results are summarized in Figure \ref{fig:reliability return}. It can be seen that the reduction of temporal dependence results in different degrees of performance degradation. More concretely, first, the performance gradually decreases as the degree to which we intervene on the input state increases (i.e., ``4''$\rightarrow$``34''$\rightarrow$``234''$\rightarrow$``1234''). Second, if the temporal dependence is destroyed completely such as the schemes ``234'' and ``1234'', the agent is close to collapse. Third, although the scheme ``123'' also destroys the temporal dependence, the performance of scheme ``123'' does not collapse dramatically since the fourth frame is the most important frame for making decisions. Fourth, intervening on single previous frame does not lead to obvious performance degradation, such as the schemes ``1'', ``2'' and ``3''. Furthermore, we can conclude that the earlier the frame is observed, the less important it is to the current decision, since the intervention on the current frame leads to larger performance degradation than that on the previous frames, as can be seen from the schemes ``1'', ``2'', ``3'' and ``4''.

 It is worth noting that the semantic information masked out mainly consists of task-irrelevant (or background) information and semantic features related to temporal dependence. Meanwhile, it can be seen from the last column of Figure \ref{fig:evaluation metrics} that the mean normalized return is greater than or equal to 1. In other words, the agent model taking as input only the causal features discovered by TSCI model achieves better performance than that taking as input all available information. Therefore, it can be concluded that the loss of background information does not cause performance degradation, and the performance degradation observed in Figure \ref{fig:reliability return} is mainly attributed to the destruction of semantic features related to temporal dependence.
 \begin{figure}[t]
	\setlength{\abovecaptionskip}{-0.01cm}
	\setlength{\belowcaptionskip}{-0.15cm}
	\begin{center}
		\includegraphics[width=0.95\linewidth]{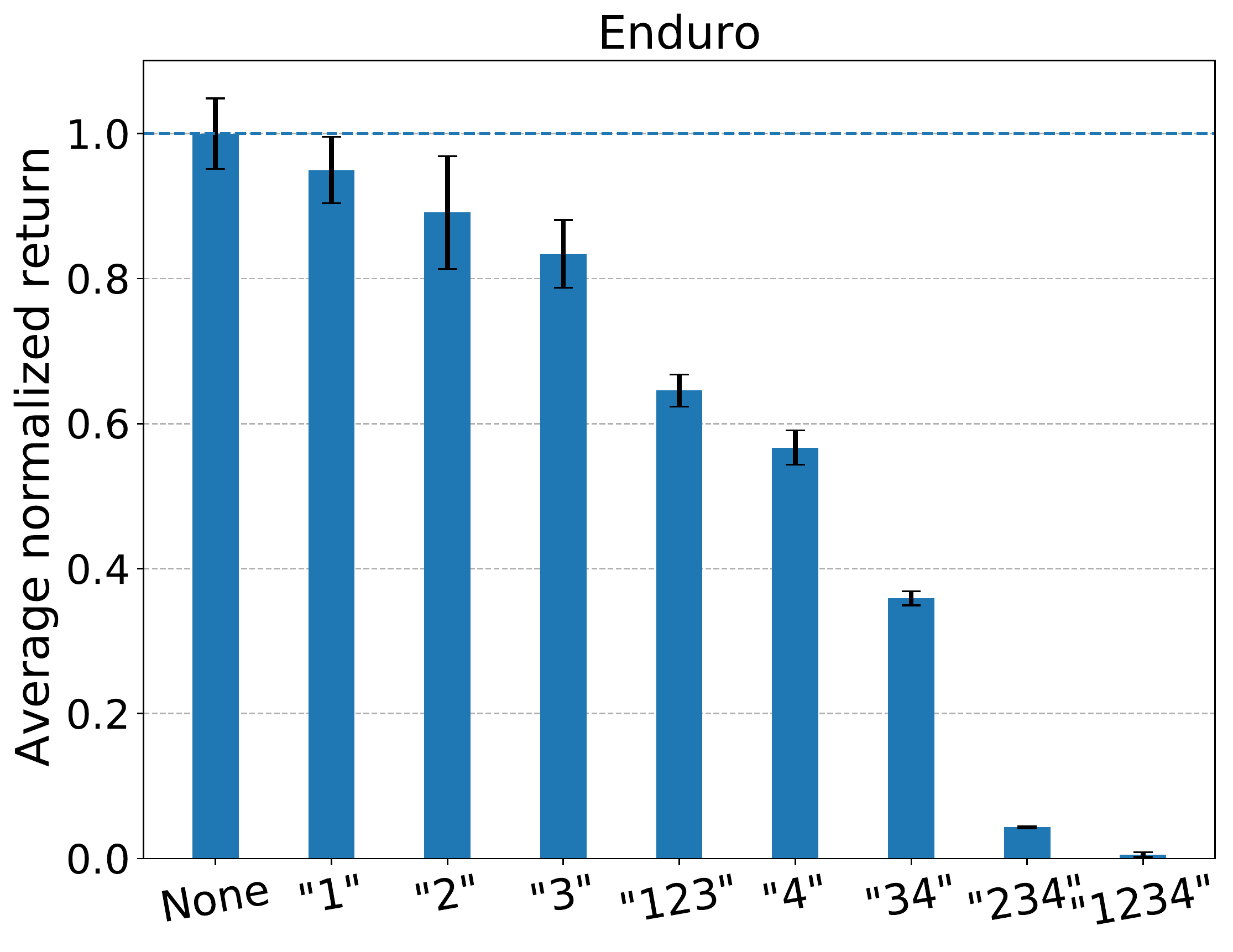}
	\end{center}
	\caption{Performance evaluations of temporal causal features when different intervention schemes are applied to the input state. The agent to be evaluated remains unchanged except for the input state.}
	\label{fig:reliability return}
 \end{figure}

 To further explain and understand how the reduction of temporal dependence affects the agent's decision and performance, we apply the proposed TSCI model to visualize the agent's attention for making decisions under different counterfactual conditions. Specifically, for each intervention scheme, we retrain a separate TSCI model to discover temporal-spatial causal features while the agent to be interpreted remains unchanged. The results are shown in Figure \ref{fig:reliability}, which renders empirical causal reasoning about temporal dependence from two aspects. First, by comparing ``None'' to ``3'' or ``4'', we observe that the agent shifts some of its attention to the previous frames when we only intervene on a single frame of the input state. In other words, the agent can recover temporal dependence partially by extracting high-level semantic features from the unmasked part of input state. Second, when the temporal dependence is destroyed completely such as ``234'' and ``1234'', the agent fails to learn the representation of high-level semantic concepts and is thus prone to collapse. The above observations provide empirical explanations about why the agent is still able to perform well in schemes ``1'', ``2'', and ``3'' but there is obvious performance degradation in schemes ``234'' and ``1234''. Third, under the intervention scheme ``34'', the agent shifts some of its attention to the first two frames, hence the performance does not collapse dramatically like ``234'' and ``1234''. Nevertheless, the scheme ``34'' also leads to obvious performance degradation as can be seen in Figure \ref{fig:reliability return}, since the first two frames are far less important than the last two frames according to the result of ``None'' in Figure \ref{fig:reliability}. In summary, both the last frame and temporal dependence are important to the agent’s performance. While destroying one of them will only lead to varying degrees of performance degradation, destroying all of them is likely to cause a dramatic collapse in performance. In fact, due to the essential role of temporal dependence in vision-based RL, it may be untrustworthy to explain vision-based RL agents with some existing interpretation methods developed exclusively for supervised learning, which does not involve temporal dimension.
 \begin{figure*}[t]
	\setlength{\belowcaptionskip}{-0.10cm}
	\begin{center}
		\includegraphics[width=0.96\linewidth]{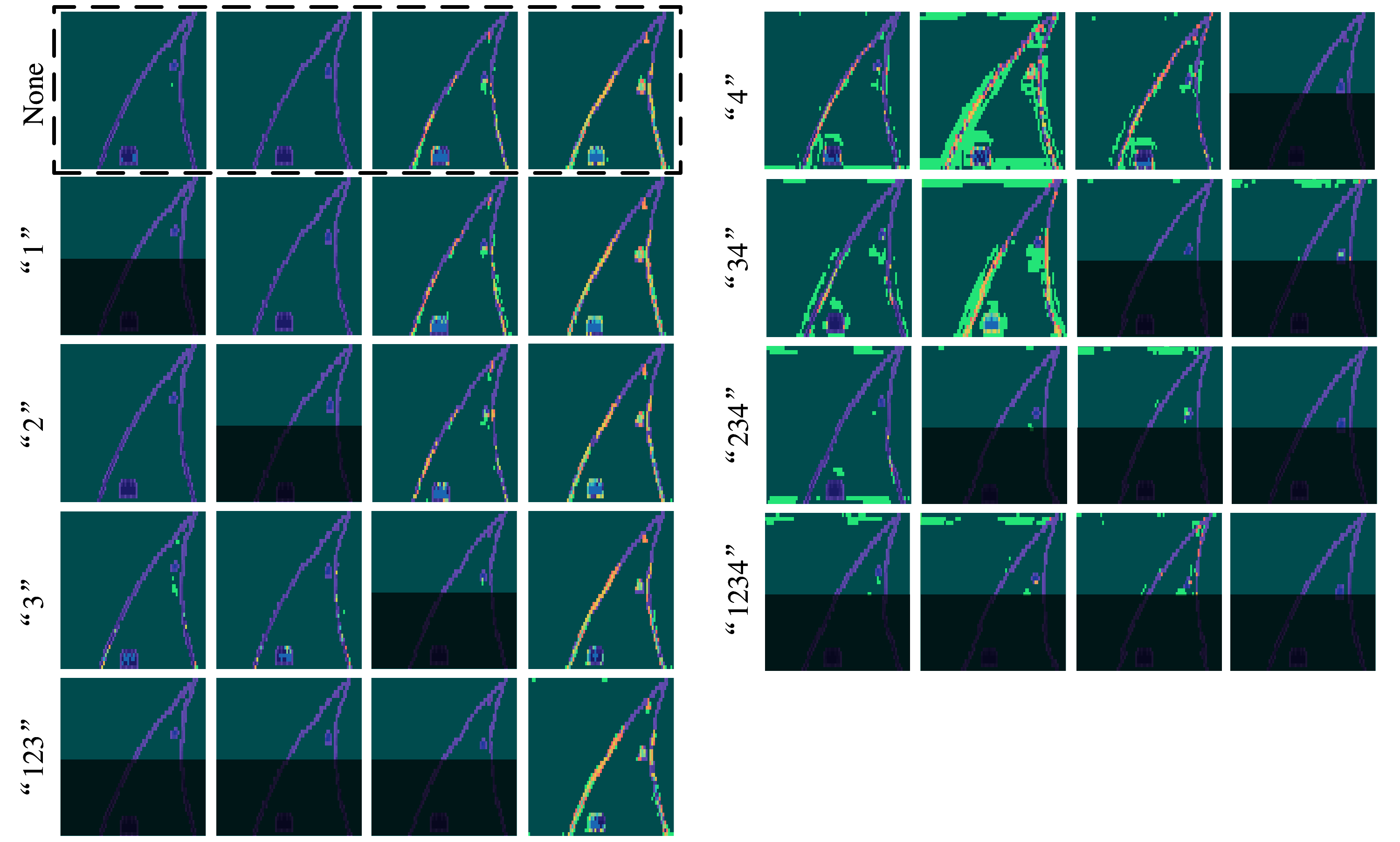}
	\end{center}
	\caption{Visualization of temporal-spatial causal features discovered by the proposed TSCI model when different intervention schemes are applied to intervene the input state. Every row is the same state consisting of four consecutive frames (time goes from left to right). The black areas are left out during the retraining of TSCI model.}
	\label{fig:reliability}
 \end{figure*}

\subsection{Interpreting Temporally-Extended Agents}
 In the context of vision-based RL, the agent is usually devised to have a recurrent structure such as RNNs, especially for partially observable environments. The main motivation of such design is to enhance and utilize the underlying temporal dependence, which has been empirically verified to play a significant role for sequential decision-making in the previous section. In principle, the agent with a recurrent structure should be able to guarantee temporal dependence by extracting high-level information from consecutive observations. However, frame stacking technique is generally needed to obtain temporally-extended input states apart from applying a recurrent structure in many applications. Therefore, a direct question about this is that whether frame stacking is redundant for the agent with a recurrent structure. In other words, is it necessary for temporally-extended agent model to use temporally-extended inputs in vision-based RL?
 \begin{figure}[t]
  \setlength{\abovecaptionskip}{-0.01cm}
  \setlength{\belowcaptionskip}{-0.25cm}
  \begin{center}
   \includegraphics[width=0.95\linewidth]{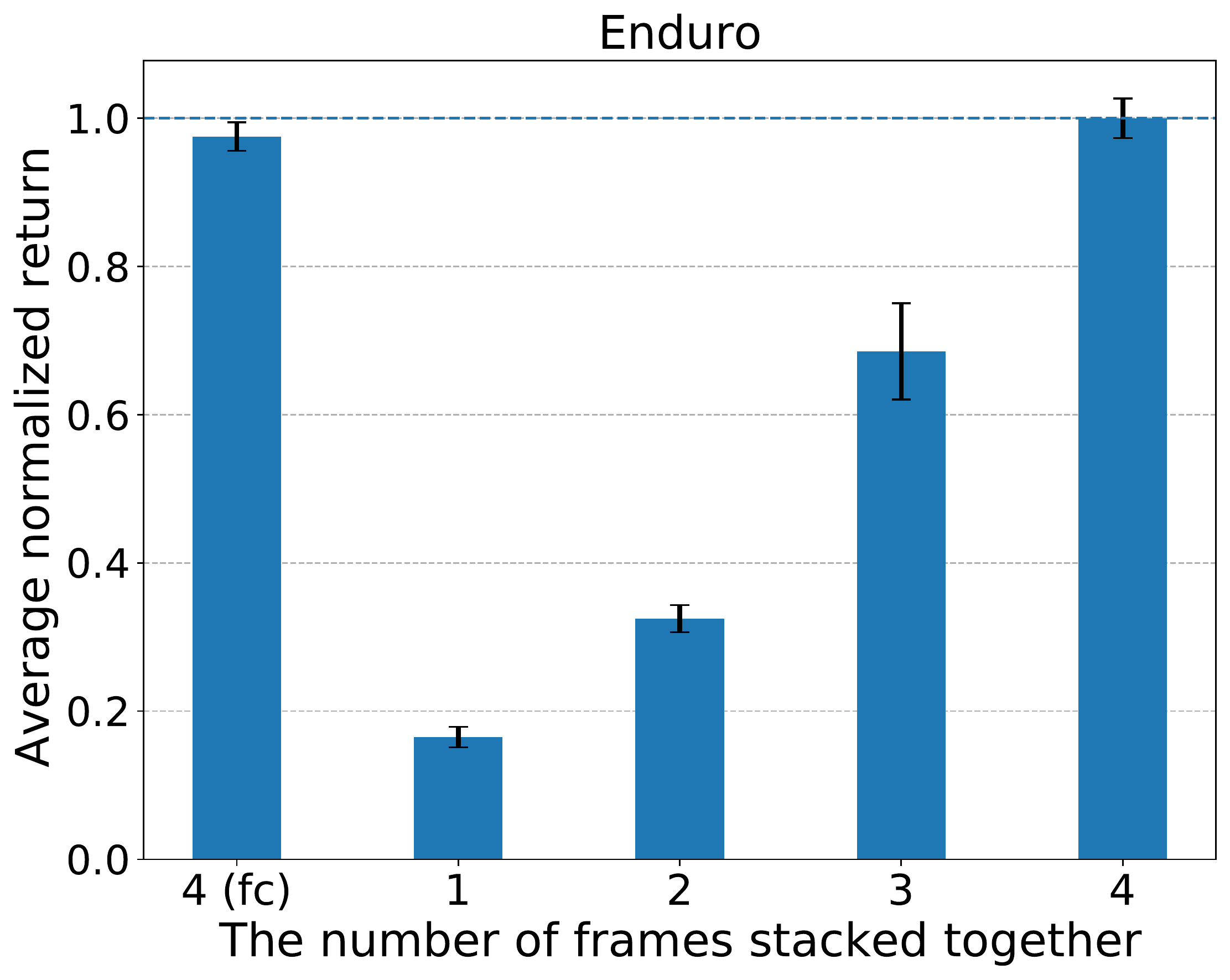}
  \end{center}
  \caption{Performance comparison of five agents that apply different forms of input state or structure. While the first agent applies a full-connected (fc) feedforward structure, the last four agents use the same recurrent structure and their input states are obtained by stacking different number of the last consecutive frames together. All results are averaged across five independent runs.}
  \label{fig:frame return}
 \end{figure}
 \begin{figure*}[t]
  \begin{center}
   \includegraphics[width=0.97\linewidth]{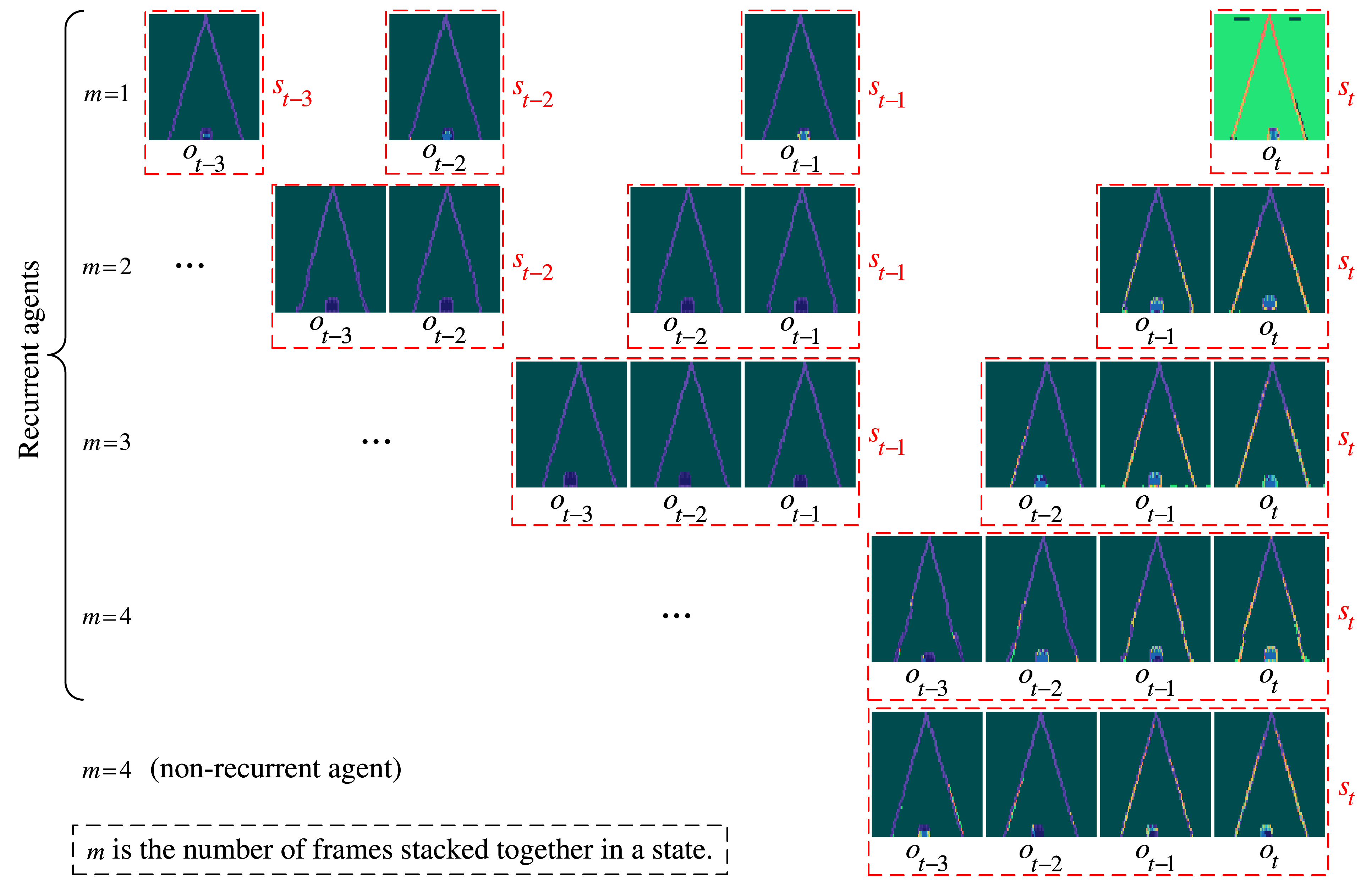}
  \end{center}
  \caption{Comparison between the attentions of four recurrent agent models that apply frame stacking in different ways. Each row visualizes the attention of the agent whose input states $s_k$ are obtained by stacking $m$ consecutive frames $``o_{k-m+1}\cdots o_{k-1}o_k"$ together. Each red dashed rectangular represents the input state $s_k$ at the $k$-th timestep. Here we consider the input states that only include the last four frames $``o_{t-3:t}"$, and thus not all the last four input states $``s_{t-3:t}"$ are visualized due to the limitation of space.}
  \label{fig:frame stacking number}
 \end{figure*}

 To answer the above question, we perform comparative experiments to assess the impacts of both recurrent structure and frame stacking technique on the agent's performance. Specifically, we retrain two groups of agents that apply different forms of input state and network structure. The first group of agents have the same form of input state but different network structures, and the second group of agents use the same recurrent structure but their input states consist of different numbers of the last consecutive frames. The empirical results are shown in Figure \ref{fig:frame return}. It can be seen that the agent performs badly when the input state includes only the current frame and does not include any previous frames (i.e., frame stacking technique is not applied). In contrast, using a full-connected feedforward structure does not lead to significant performance degradation, indicating that a recurrent structure itself may be not enough to capture complete temporal dependence between consecutive frames although it shows remarkable memory ability for low-dimensional sequential data. In addition, the results in Figure \ref{fig:temporal causal features} show that the first two frames are not as important as the last two frames, but we can see that only using the last two or three frames in states causes information loss and thus results in performance degradation as shown by the third and fourth columns of Figure \ref{fig:frame return}. In summary, we can observe that the agent performs better as the input state includes more consecutive frames. Based on these observations, it can be concluded that only using a recurrent structure is not enough for the agent to achieve good performance in vision-based RL, and it is generally necessary to apply frame stacking technique to obtain temporally-extended input states, especially for partially observable RL environments.

 Intuitively, the agent with a recurrent structure is expected to be capable of learning temporal representations for making sequential decisions. To explain and understand why a recurrent agent model still needs temporally-extended inputs in vision-based RL, we leverage the proposed TSCI model to visualize how the recurrent agent's attention changes as the number of frames stacked together in states varies. The empirical results are visualized in Figure \ref{fig:frame stacking number}. It can be seen that the agent fails to attend to causal features located at the previous frames through recurrent structure. For example, the agent mistakenly focuses most of its attention on the current frame while the previous frames are considered almost irrelevant to the current decision as shown in the first row. More concretely, the green background of $s_t$ in the first row indicates that the background in the current frame is mistakenly considered more important than the lane lines in the previous frames, although the lane lines in the current frame are properly considered the most important. As a consequence, the agent cannot build completely the temporal dependence between consecutive frames that are input to the model at different timesteps. Additionally, we can observe that the temporal dependence between the frames that are stacked together in the same state is built successfully, as shown in the last three rows. Therefore, it is generally necessary for temporally-extended agent model to use temporally-extended inputs in vision-based RL from the perspective of temporal dependence.

 In fact, recurrent structures have been shown to be effective in other fields like natural language processing (NLP). Why recurrent structures do not show the same effectiveness in vision-based RL may be attributed to the following reasons. First, there exists information loss in recurrent structures such as GRUs, and the decision of RL agent relies heavily on the temporal dependence between consecutive observations and thus is more sensitive to information loss than NLP, where there are usually only semantic and syntactic connections between different words. Second, the learning of low-dimensional latent vectors of images is generally coupled with the learning of credit assignment, but sparse task rewards do not provide enough signal for the agent to learn what to store in memory and RL agents require self-supervised auxiliary training to learn an abstract, compressed representation of each input frame \cite{fortunato2019generalization, wayne2018unsupervised, lampinen2021towards}.  More potential reasons and their verifications are important research directions for future work.

\subsection{More Downstream Interpretations for Deep RL}
 In addition to the network structure discussed above, there are many other factors that also affect the behavior of RL agents. In this section, we further discuss the impact of different RL algorithms and environments on the causal features of vision-based RL agents. Such an analysis can provide explainable basis for the selection of models in different scenarios.

 \textbf{Case 1: RL algorithm.}
 Here we compare the temporal-spatial causal features of two RL agents that are trained using PPO and Advantage Actor-Critic (A2C) algorithms respectively. The empirical results are provided and visualized in Figure \ref{fig:downstream_algorithm}. It can be seen that there is no remarkable difference between the temporal-spatial causal features discovered by using the proposed TSCI method, but the agent trained using PPO algorithm exhibits better performance than the agent using A2C algorithm in terms of feature saliency and completeness. This observation explains to some extent the reason why PPO generally performs better than A2C on most RL tasks.
 \begin{figure}[t]
  \begin{center}
   \includegraphics[width=0.95\linewidth]{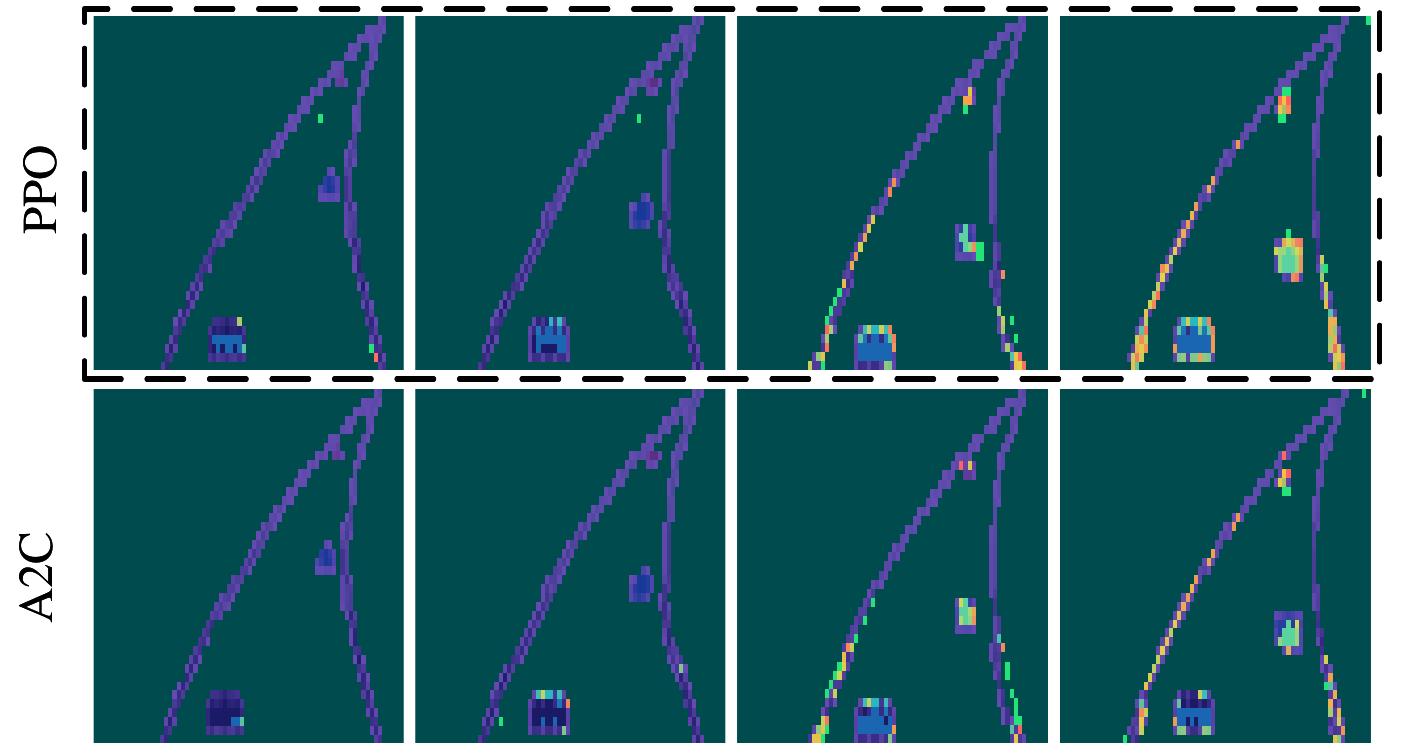}
  \end{center}
  \caption{The temporal-spatial causal features of two RL agents that are trained using PPO and A2C respectively.}
  \label{fig:downstream_algorithm}
 \end{figure}

 \textbf{Case 2: Environment.}
 To evaluate the performance of our method on different environments, we perform comparative experiments on two similar tasks of two different environments, namely Enduro of Arcade Learning Environment and Lane-following of Duckietown \cite{gym_duckietown}. While Arcade Learning Environment builds on top of the Atari 2600 emulator Stella, Duckietown is a self-driving car simulator that builds on top of ROS environment. As shown in Figure \ref{fig:downstream_environment}, our method is able to discover causal features on both Arcade Learning Environment and Duckietown, although they have slight differences in feature saliency.
 \begin{figure}[t]
  \setlength{\belowcaptionskip}{-0.15cm}
  \begin{center}
   \includegraphics[width=0.95\linewidth]{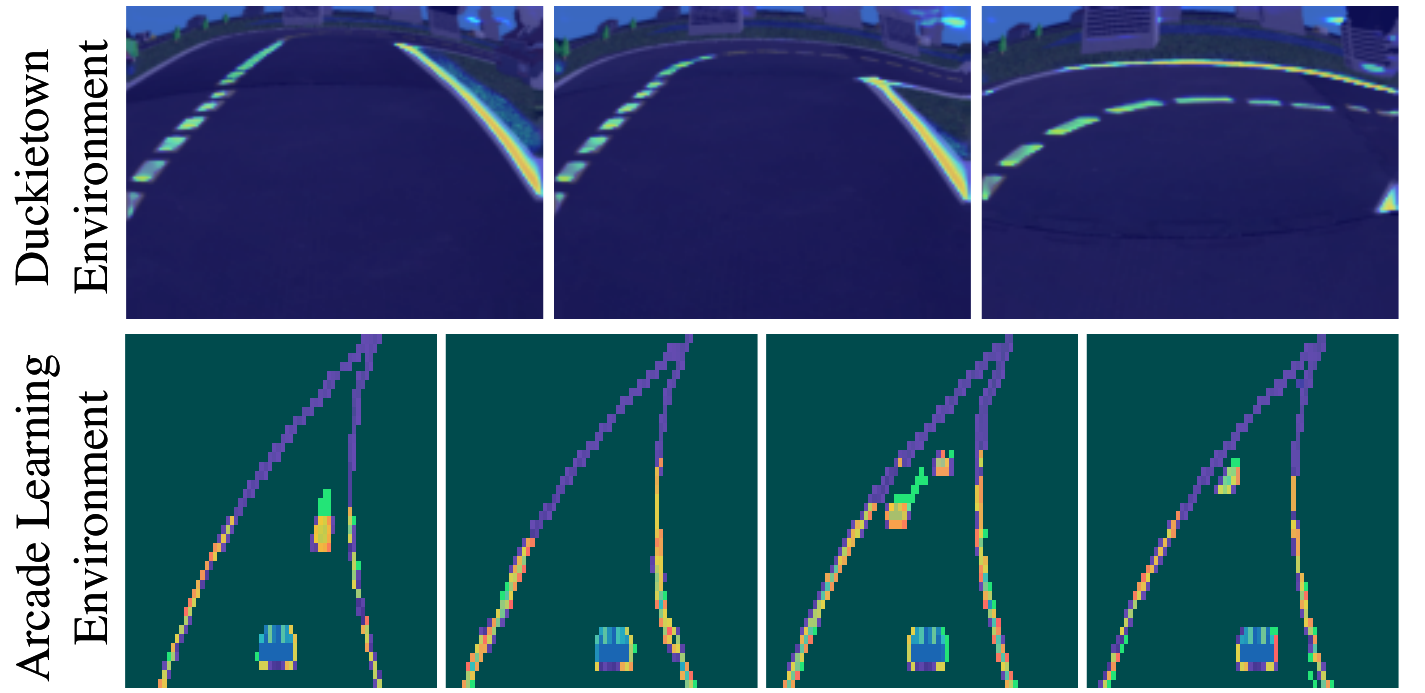}
  \end{center}
  \caption{The temporal-spatial causal features discovered by our method on Duckietown and Arcade Learning environments.}
  \label{fig:downstream_environment}
 \end{figure}

\section{Conclusion}
 We presented a trainable temporal-spatial causal interpretation (TSCI) model for vision-based RL agents. TSCI model is based on the formulation of temporal causality between sequential observations and decisions. To identify temporal-spatial causal features, a separate causal discovery network is employed to learn the temporal causality, which emphasizes the explanation about the agent's long-term behavior rather than single action. This approach has several appealing advantages. First, it is compatible with most RL algorithms and applicable to recurrent agents. Second, we do not require adapting or retaining the agent model to be interpreted. Third, TSCI model can, once trained, be used to discover temporal-spatial causal features in little time. We showed experimentally that TSCI model can produce high-resolution and sharp attention masks to highlight task-relevant temporal-spatial information that constitutes most evidence about how vision-based RL agents make sequential decisions. We also demonstrated that our method is able to provide valuable causal interpretations about the agent's behavior from the temporal-spatial perspective. In summary, this work provides significant insights towards interpretable vision-based RL from a temporal-spatial causal perspective. A more extensive study will be carried out to reason about the temporal dependence between causal features of different observations, and explore how to devise better RL agent structures with strong temporal dependence.

\section*{Acknowledgments}
Wenjie Shi and Gao Huang contribute equally to this work. This work is supported in part by the National Science and Technology Major Project of the Ministry of Science and Technology of China under Grant 2018AAA0100701, the Major Research and Development Project of Guangdong Province under Grant 2020B1111500002, the National Natural Science Foundation of China under Grants 61906106 and 62022048. We would like to thank the reviewers for their valuable comments.


\ifCLASSOPTIONcaptionsoff
  \newpage
\fi



\bibliographystyle{IEEEtran}

\bibliography{reference}
\end{document}